\documentclass[12pt]{article}

\usepackage{graphicx,psfrag,epsf}
\usepackage{enumerate}
\usepackage{natbib}
\usepackage{mathtools}
\usepackage{booktabs}
\usepackage{color}
\usepackage[english]{babel} 
\usepackage[protrusion=true,expansion=true]{microtype} 
\usepackage{amsmath,amsfonts,amsthm}
\usepackage{comment}
\usepackage{dsfont}
\usepackage{amssymb}
\usepackage{chngcntr}
\usepackage[toc,page]{appendix}
\usepackage{textcomp}
\usepackage{url}
\usepackage{hyperref}
\usepackage{subcaption}  
\usepackage{array}
\usepackage{booktabs} 
\usepackage{setspace}
\usepackage{amsmath}
\usepackage{xcolor}
\usepackage{wrapfig}

\bibliographystyle{apalike}


\usepackage{soul}
\usepackage{multirow}
\usepackage{enumitem}
\usepackage{microtype} 
\usepackage[font = small,labelfont=bf,textfont=it]{subcaption}
\usepackage{xcolor}
\usepackage{tabularx}
\usepackage[font = small,labelfont=bf,textfont=it]{caption} 
\usepackage{footnote}
\usepackage{algorithm}
\usepackage[noend]{algpseudocode}
\usepackage{etoolbox}
\usepackage{tabularx}
\usepackage{authblk}
\usepackage{caption}
\usepackage{indentfirst}
\usepackage{enumitem}
\usepackage{textcomp}
\usepackage{graphicx}
\usepackage{caption}
\usepackage{subcaption}

\newlist{steps}{enumerate}{1}
\setlist[steps, 1]{label = Step \arabic*:}

\algblock{ParFor}{EndParFor}
\algnewcommand\algorithmicparfor{\textbf{for}}
\algnewcommand\algorithmicpardo{\textbf{do\ parallel}}
\algnewcommand\algorithmicendparfor{\textbf{end\ parallel\ for}}
\algrenewtext{ParFor}[1]{\algorithmicparfor\ #1\ \algorithmicpardo}
\algrenewtext{EndParFor}{\algorithmicendparfor}

\makeatletter
\def\BState{\State\hskip-\ALG@thistlm}

\newcommand{\distas}[1]{\mathbin{\overset{#1}{\kern\z@\sim}}}%

\newsavebox{\mybox}\newsavebox{\mysim}
\newcommand{\distras}[1]{%
  \savebox{\mybox}{\hbox{\kern3pt$\scriptstyle#1$\kern3pt}}%
  \savebox{\mysim}{\hbox{$\sim$}}%
  \mathbin{\overset{#1}{\kern\z@\resizebox{\wd\mybox}{\ht\mysim}{$\sim$}}}%
}
\newtheorem{theorem}{Theorem}

\newtheorem{proposition}[theorem]{Proposition}
\newtheorem{corollary}{Corollary}
\newtheorem{assumption}{Assumption}

\setlength\heavyrulewidth{1.5pt} 

\newcommand{\be}{\begin{equation}}
\newcommand{\ee}{\end{equation}}
    \newcommand{\bi}{\begin{itemize}}
\newcommand{\ei}{\end{itemize}}
\newcommand{\ben}{\begin{enumerate}}
\newcommand{\een}{\end{enumerate}}

\newcolumntype{K}[1]{\geq {\centering\arraybackslash}p{#1}}
\allowdisplaybreaks
\DeclareMathOperator*{\argmin}{\arg\!\min}
\makeatother

\let\oldbibliography\thebibliography
\renewcommand{\thebibliography}[1]{\oldbibliography{#1}
\setlength{\itemsep}{0pt}} 



\newcommand{\blind}{1}

\addtolength{\oddsidemargin}{-.5in}%
\addtolength{\evensidemargin}{-.5in}%
\addtolength{\textwidth}{1in}%
\addtolength{\textheight}{0.75in}%
\addtolength{\topmargin}{-.8in}%
\patchcmd{\footnotemark}{\stepcounter{footnote}}{\refstepcounter{footnote}}{}{}
\interfootnotelinepenalty=10000

\setcitestyle{square}
\newtheorem{lemma}{Lemma}



\begin{document}

\def\spacingset#1{\renewcommand{\baselinestretch}%
{#1}\small\normalsize} \spacingset{1}

\if1\blind
{
  \title{\bf Efficient optimization of expensive black-box simulators via marginal means, with application to neutrino detector design}
  \small
   \author{Hwanwoo Kim\footnote{Department of Statistical Science, Duke University}\;, Simon Mak$^*$\footnote{SM and HK are supported by NSF CSSI 2004571, NSF DMS 2210729, 2316012 and DE-SC0024477.}, Ann-Kathrin Schuetz\footnote{Nuclear Science Division, Lawrence Berkeley National Laboratory}\;, Alan Poon$^\ddagger$\footnote{Lawrence Berkeley National Laboratory is operated by the University of California under the U.S. Department of Energy Federal Prime Agreement DE-AC02-05CH11231.}
   }
  \maketitle
} \fi

\if0\blind
{
  \bigskip
  \bigskip
  \bigskip
  \begin{center}
    {\LARGE\bf Efficient optimization of expensive black-box simulators via marginal means, with application to neutrino detector design}
\end{center}

  \medskip
} \fi

\begin{abstract}
With advances in scientific computing, computer experiments are increasingly used for optimizing complex systems. However, for modern applications, e.g., the optimization of nuclear physics detectors, each experiment run can require hundreds of CPU hours, making the optimization of its black-box simulator $f$ over a high-dimensional space $\mathcal{X}$ a challenging task. Given limited runs at inputs $\mathbf{x}_1, \cdots, \mathbf{x}_n \in \mathcal{X}$, the best solution from these evaluated inputs can be far from optimal, particularly as dimensionality increases. Existing black-box methods, however, largely employ this ``pick-the-winner'' (PW) solution, which leads to mediocre optimization performance. To address this, we propose a new Black-box Optimization via Marginal Means (BOMM) approach. The key idea is a new estimator of a global optimizer $\mathbf{x}^*$ that leverages the so-called marginal mean functions, which can be efficiently inferred with limited runs in high dimensions. Unlike PW, this estimator can select solutions beyond evaluated inputs for improved optimization performance. Assuming $f$ follows a generalized additive model with unknown link function and under mild conditions, we prove that the BOMM estimator not only is consistent for optimization, but also has an optimization rate that tempers the ``curse-of-dimensionality'' faced by existing methods, thus enabling better performance as dimensionality increases. We present a practical framework for implementing BOMM using the transformed Gaussian process surrogate model in \cite{lin2020transformation}. Finally, we demonstrate the effectiveness of BOMM in numerical experiments and an application on neutrino detector optimization in nuclear physics.
\end{abstract}

\noindent
{\it Keywords}: Bayesian Optimization, Black-Box Optimization, Computer Experiments, Detector Design, Gaussian Process, Uncertainty Quantification.
\vfill

\newpage
\spacingset{1.55} 

\section{Introduction}\label{sec:Intro}

Scientific computing is undergoing rapid development. With recent progress, complex phenomena, e.g., rocket engines \citep{mak2018efficient}, universe expansion \citep{kaufman2011efficient} and particle collisions \citep{ji2023graphical,ji2024conglomerate}, can now be reliably simulated via virtual simulation. These ``computer experiments'' \citep{gramacy2020surrogates, deng2025design} offer an appealing alternative to physical experiments \citep{wu2011experiments}, which may be impractical or infeasible in modern applications. However, such virtual experiments often incur high computational costs that hamper their use for scientific decision-making, particularly for optimizing the simulated response surface $f(\cdot)$ over a design space $\mathcal{X}$. We face this bottleneck in our motivating application of designing complex detectors for neutrinoless double-beta decay searches \citep{Dolinski:2019nrj}. Such a decay mechanism provides important insight into the fundamental matter-antimatter asymmetry in the Universe \citep{canetti2012matter}, but its detection requires careful detector optimization to suppress cosmogenic backgrounds. While virtual simulators provide an appealing strategy for detector optimization, the simulation of a single detector design can require hundreds of CPU hours, which makes its optimization a highly challenging task.

A proven solution is probabilistic surrogate modeling \citep{overstall2016multivariate}. The idea is to run the computer experiment at designed input points $\mathbf{x}_1, \cdots, \mathbf{x}_n \in \mathcal{X} \subset \mathbb{R}^d$, then use the simulated data $[f(\mathbf{x}_i)]_{i=1}^n$ to fit a probabilistic model that predicts $f$ with uncertainty at untested inputs. A popular surrogate choice is the Gaussian process (GP; \citealp{RasmussenWilliams06, stein2012interpolation}), which provides flexible probabilistic modeling with closed-form predictive equations. This not only permits efficient exploration of $f$ over the design space $\mathcal{X}$, but also facilitates timely downstream scientific decision-making, e.g., optimization \citep{miller2025targeted, kim2024enhancing} and inverse problems \citep{ehlers2024bayesian, kim2024optimization}. Recent developments on GP surrogates include the use of deeper architectures \citep{sauer2023active,montagna2016computer} and the incorporation of domain physics \citep{ding2025bdrymat,golchi2015monotone}.

We consider the specific task of minimizing\footnote{Here, one can easily maximize $f$ by minimizing the modified objective $-f$.} the expensive black-box function $f$:
\begin{equation}
    \mathbf{x}^* \in \underset{\mathbf{x} \in \mathcal{X}}{\text{Argmin}} \; f(\mathbf{x}),
    \label{eq:min}
\end{equation}
which is critical for many facets of decision-making via computer experiments, including system optimization \citep{paulson2025bayesian} and control \citep{miller2024diverse}. Here, $\text{Argmin}$ denotes the set of input points that minimize $f$. Existing ``black-box optimization'' approaches can be classified as sequential or one-shot methods. Sequential methods perform sequential (or batch-sequential) evaluations of $f$, where each input $\mathbf{x}_n$ is adaptively selected using evaluation data from previous inputs $\mathbf{x}_1, \cdots, \mathbf{x}_{n-1}$. Such methods have received much attention in the Bayesian optimization literature; see, e.g., \cite{jones1998efficient,chen2023,frazier2008knowledge}. However, for expensive computer simulators, the high cost of a single run can be a barrier for sequential methods. For example, in our detector optimization application, a high-fidelity simulation for a single detector design can require hundreds of CPU hours, which prevents any adaptive iterations when a decision needs to be made promptly. In such a scenario, \textit{one-shot} methods that simultaneously perform all runs may be more feasible. One-shot methods are facilitated by the rise of distributed computing, which permits the simultaneous evaluation of $f$ at many inputs via multi-core processing. We will focus on such one-shot methods here, as motivated by our application.

Existing one-shot black-box optimization approaches broadly fall into two categories \citep{thomaser2022one}. The first adopts the simple but intuitive strategy of picking the best solution $\hat{\mathbf{x}}_n^* = \underset{\mathbf{x} \in \{\mathbf{x}_1, \cdots, \mathbf{x}_n\}}{\arg\min} f(\mathbf{x})$ amongst the evaluated inputs. This was coined the ``pick-the-winner'' (PW) approach in \cite{wu1990sel} and \cite{mak2019analysis}, and is broadly used in practice. Given limited runs over a high-dimensional space $\mathcal{X}$, however, the evaluated inputs can be far from optimal, in which case PW may yield mediocre performance. The second strategy is to first fit a surrogate model $\hat{f}_n(\cdot)$ from data, then ``infer'' $\mathbf{x}^*$, i.e., infer an optimal solution from \eqref{eq:min}, via its minimizer $\hat{\mathbf{x}}_n^* = \underset{\mathbf{x}\in \mathcal{X}}{\arg\min} \; \hat{f}_n(\mathbf{x})$. While this surrogate-based approach may yield improvements over PW when the surrogate fits well globally, this is by no means guaranteed; when this fit is poor, such approaches may perform worse than PW. For high-dimensional spaces $\mathcal{X}$, surrogate-based approaches may further face a ``curse-of-dimensionality'' \citep{bellman1966dynamic}, in that the surrogate fit becomes increasingly poor as dimension $d$ increases. This is well-known for GP surrogates, which have an $L_\infty$-prediction rate of $\mathcal{O}(n^{-\nu/d})$ using the Mat\'ern kernel \citep{stein2012interpolation} with smoothness parameter $\nu > 0$; see \cite{wu1993local,wendland2004scattered}. This exponential dependence of sample size $n$ on $d$ can result in rapid deterioration of surrogate (and thus optimization) performance as dimensionality increases \citep{ding2019bdrygp}. A similar curse-of-dimensionality is also present for sequential Bayesian optimization methods \citep{bull2011convergence, kim2025inexact}.

To address this, we propose a new Black-box Optimization via Marginal Means (BOMM) approach for one-shot black-box optimization. The key idea is to construct a new BOMM estimator $\hat{\mathbf{x}}_n^*$ of an optimizer $\mathbf{x}^*$ that depends on the so-called marginal mean functions. In contrast to PW, our BOMM estimator can select solutions beyond evaluated inputs to improve black-box optimization with limited data. In contrast to surrogate-based approaches, which require the challenging task of a good surrogate fit over the \textit{full} domain $\mathcal{X}$, the marginal mean functions in BOMM can be effectively estimated in high dimensions with limited runs. Assuming $f$ follows a generalized additive model \citep{hastie2017generalized} with unknown link function and under mild regularity conditions, we prove that the BOMM optimality gap $|f(\hat{\mathbf{x}}_n^*) - f(\mathbf{x}^*)|$ not only converges to zero, but does so at a rate with considerably less dependence on dimensionality than existing methods, thus tempering the curse-of-dimensionality and facilitating good performance as $d$ increases. We then present a methodological framework, which leverages the transformed approximate additive GP model in \cite{lin2020transformation} for an effective implementation of BOMM. Finally, we demonstrate the effectiveness of BOMM over the state-of-the-art in a suite of numerical experiments and for our motivating application of neutrino detector optimization. 

There are important practical considerations when inferring an optimal solution beyond evaluated points. Despite its limitations, one appeal of PW is its reliability: its inferred solution $\hat{\mathbf{x}}_n^*$ is naturally validated by an evaluated point. This is desirable in applications where final design decisions are made promptly after inference. For complex scientific applications (e.g., detector design), however, the inferred solution $\hat{\mathbf{x}}_n^*$ is typically but one step in the design process; such a solution is then further investigated and validated by scientists prior to design decisions. For such problems, the validation of $\hat{\mathbf{x}}_n^*$ within the black-box optimization procedure is not essential, and the improvement gained from inferring beyond evaluated points can be highly beneficial with limited runs, as we show later.

This paper is organized as follows. Section \ref{sec:Background} provides background on GPs, existing one-shot black-box methods, and their potential limitations in motivating experiments. Section \ref{sec:bomm} presents the proposed BOMM estimator and proves its optimization consistency and associated rate. Section \ref{sec:bommimpl} outlines a comprehensive methodological framework for effective implementation. Sections \ref{sec:numerics} and \ref{sec:application} investigate the performance of BOMM in numerical experiments and an application on detector optimization. Section \ref{sec:conclusion} concludes the paper.

\section{Background and Motivation}\label{sec:Background}
We first give a brief review of GPs, then outline existing one-shot black-box optimization methods and their potential limitations in a motivating experiment.

\subsection{Gaussian process modeling}
\label{sec:gp}
Let $f:\mathcal{X} \rightarrow \mathbb{R}$ be the black-box function to optimize, where $\mathcal{X}$ is its design space. In what follows, we presume $\mathcal{X}$ to be a rectangular domain of the form $\mathcal{X} = \prod_{l=1}^d [L_l,U_l]$, where $L_l$ and $U_l$ are the lower and upper limits for the $l$-th input variable. Given the black-box nature of $f$, one can adopt a Gaussian process (GP; \citealp{RasmussenWilliams06}) prior on $f$: $f(\cdot) \sim \text{GP}\{\mu,k(\cdot,\cdot)\}$. Here, $\mu$ is a mean parameter that can be estimated from data, and $k(\cdot,\cdot)$ is a kernel function that controls sample path smoothness. Common kernel choices include the squared-exponential and the Mat\'ern kernels \citep{stein2012interpolation,gramacy2020surrogates}. 

Next, suppose the expensive computer simulator is evaluated at $n$ designed input points $\mathbf{x}_1, \cdots, \mathbf{x}_n$, yielding data $\mathbf{f}_n = [f(\mathbf{x}_1), \cdots, f(\mathbf{x}_n)]$. In what follows, we presume that the simulator is deterministic, in that it returns the same output $f(\mathbf{x})$ given the same input $\mathbf{x}$. This is commonly assumed in the computer experiments literature, particularly when $f$ solves a deterministic partial differential equation system. One can easily account for Gaussian simulation noise by incorporating a nugget term in the predictive equations below; see \cite{peng2014choice}. Conditional on data $\mathbf{f}_n$, the predictive distribution of $f(\mathbf{x}_{\rm new})$ at an untested point $\mathbf{x}_{\rm new}$ can be shown to be $[f(\mathbf{x}_{\rm new})|\mathbf{f}_n] \sim \mathcal{N}\{\hat{f}_n(\mathbf{x}_{\rm new}),\sigma^2_n(\mathbf{x}_{\rm new})\}$, where:
\begin{align}
\begin{split}
    \hat{f}_n(\mathbf{x}_{\rm new})&=\mu+\mathbf{k}_n^T(\mathbf{x}_{\rm new})\mathbf{K}_n^{-1}(\mathbf{f}_n-\mu\boldsymbol{1}),\\
    \sigma^2_n(\mathbf{x}_{\rm new})&=k(\mathbf{x}_{\rm new},\mathbf{x}_{\rm new})-\mathbf{k}_n^T(\mathbf{x}_{\rm new})  \mathbf{K}_n^{-1}\mathbf{k}_n(\mathbf{x}_{\rm new}).
    \label{eq:gppred}
\end{split}
\end{align}
Here, $\mathbf{K}_n = [k(\mathbf{x}_i,\mathbf{x}_j)]_{i,j=1}^n$ and $\mathbf{k}_n(\mathbf{x}_{\rm new}) = [k(\mathbf{x}_{\rm new},\mathbf{x}_i)]_{i=1}^n$. Equation \eqref{eq:gppred} provides the basis for efficient probabilistic surrogate modeling of $f(\cdot)$ over the input space $\mathcal{X}$.

\subsection{Existing one-shot black-box optimization methods}
\label{sec:exist}

As mentioned in the Introduction, existing one-shot black-box optimization methods can be broadly categorized as pick-the-winner and surrogate-based approaches; these approaches differ in how they ``estimate''\footnote{For black-box optimization, the quality of an estimator $\hat{\mathbf{x}}_n^*$ for an optimal solution $\mathbf{x}^*$ is typically gauged by its optimality gap $f(\hat{\mathbf{x}}_n^*) - f(\mathbf{x}^*)$.} an optimal solution $\mathbf{x}^*$. PW-based approaches \citep{wu1990sel} are simple but intuitive: they select the best observed solution $\hat{\mathbf{x}}_n^* = \underset{\mathbf{x} \in \{\mathbf{x}_1, \cdots, \mathbf{x}_n\}}{\arg\min} f(\mathbf{x})$ amongst the evaluated points. The PW estimator of $\mathbf{x}^*$ is commonly used in practice. One reason is that such an estimator is ``robust'' \citep{mak2019analysis}, in that it does not select points on which $f$ has not been evaluated. However, given a limited sample size $n$ (due to the costly nature of $f$), the evaluated design points can be far from optimal, meaning the PW estimator may yield mediocre optimization performance.

Surrogate-based optimization (SBO) approaches employ an alternate estimator of $\mathbf{x}^*$. One first uses the collected data on $f$ to train a surrogate model $\hat{f}_n(\cdot)$, then selects the optimizer of this surrogate $\hat{\mathbf{x}}_n^* = \underset{\mathbf{x}\in \mathcal{X}}{\arg\min} \; \hat{f}_n(\mathbf{x})$ as its estimate of $\mathbf{x}^*$. When the trained surrogate $\hat{f}_n$ fits well globally, surrogate-based approaches can provide improved optimization over PW \citep{thomaser2022one}; when this is not the case, however, such approaches may perform worse than PW. This phenomenon is exacerbated when $\mathcal{X}$ is high-dimensional, where surrogate quality can deteriorate quickly given a limited sample size $n$ \citep{ding2019bdrygp}. This ``curse-of-dimensionality'' is well-known for GP surrogates: for a GP with an isotropic Mat\'ern kernel $k$ \citep{stein2012interpolation} and smoothness parameter $\nu>0$ (we call this the ``Mat\'ern-$\nu$ GP'' later), one can show \citep{wu1993local,wendland2004scattered} that its $L_\infty$-prediction rate is $\|f-\hat{f}_n\|_{\infty} = \mathcal{O}(n^{-\nu/d})$ with optimally selected design points, where $f$ is in the reproducing kernel Hilbert space (RKHS; \citealp{aronszajn1950theory}) for kernel $k$, denoted $\mathcal{F}$. The optimality gap using such a surrogate thus follows a similar rate of $|f(\hat{\mathbf{x}}_n^*)-f(\mathbf{x}^*)| = \mathcal{O}(n^{-\nu/d})$ for $f \in \mathcal{F}$. The exponential dependence of sample size $n$ on dimension $d$ in this rate suggests that the performance of SBO methods can quickly worsen as dimension increases.

\subsection{Motivating experiments}\label{subsec:motiv_prob}

To highlight these limitations of PW-based and surrogate-based approaches for one-shot black-box optimization, we explore two motivating experiments in the challenging setting with limited runs in a (moderately) high-dimensional space. We consider two test functions in the computer experiments literature \citep{surjanovic2013virtual}: the six-hump camel function in $d=6$ dimensions, and the wing weight function in $d=10$ dimensions; their specific forms are provided in Appendix G. For each function, we take a one-shot design of $n = 10d$ points from a maximin Latin hypercube design \citep{morris1995exploratory} scaled to $\mathcal{X}$. For SBO, we consider two surrogate choices: a GP surrogate using the square-exponential kernel (SBO-SqExp), and a deep GP \citep{sauer2023active} surrogate (SBO-DGP). Experimental details are provided later in Section \ref{sec:numerics}.

\begin{wrapfigure}{r}{0.60\textwidth}
    \centering
    \vspace{-0.3cm}
    \includegraphics[width=\linewidth]{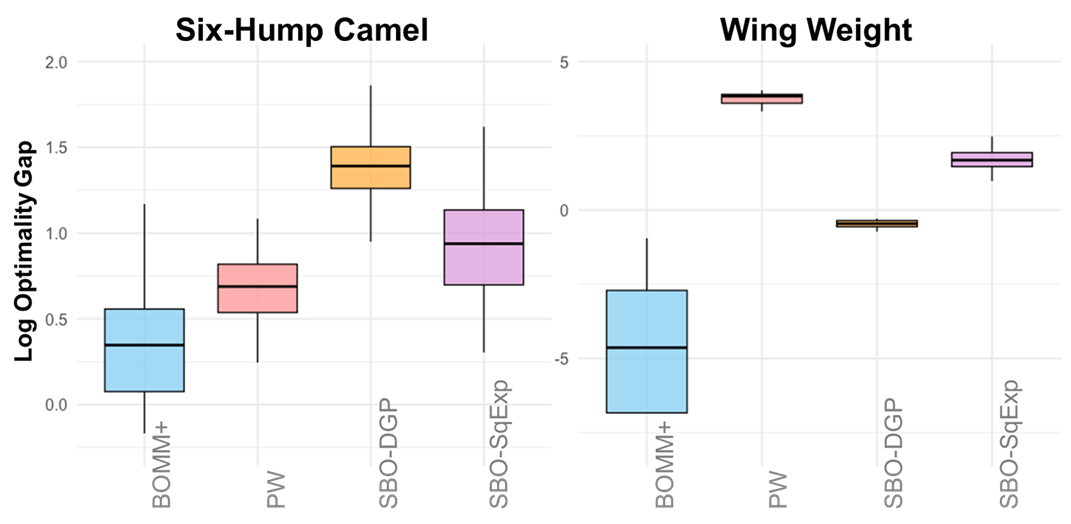}
    \caption{Log-optimality gaps of the compared methods for the six-hump camel and wing weight functions. Boxplots show experiment variability over 20 replications for each method.}
    \vspace{-0.3cm}
    \label{Fig:mot}
\end{wrapfigure}

Figure \ref{Fig:mot} shows the boxplots of the log-optimality gaps $\log |f(\hat{\mathbf{x}}_n^*)-f(\mathbf{x}^*)|$ for each method over 20 replications. There are several observations to note. First, the simple PW estimator yields large optimality gaps for the 10-d wing weight experiment. This is not surprising, as the limited evaluated points are likely far from optimal, particularly on a $d=10$-dimensional space $\mathcal{X}$. Second, the surrogate-based optimizers perform better than PW for the wing weight function, but worse for the six-hump camel function. A plausible reason is that the latter function is more complex over its domain: given a small sample size, a good global surrogate fit becomes more challenging, resulting in worse surrogate-based-optimization performance. This reliance on a good global surrogate fit can make SBO methods unreliable, particularly with limited runs in moderate-to-high dimensions. To foreshadow, the proposed BOMM addresses these limitations (see Figure \ref{Fig:mot}) via a new estimator for $\mathbf{x}^*$ that relies on marginal mean functions, which can be effectively estimated from limited data in high dimensions; we explore this next.

\section{Black-box Optimization via Marginal Means}\label{sec:bomm}

We present next our BOMM framework, which employs a new estimator for $\mathbf{x}^*$ using marginal mean functions. We first outline its estimation framework, then prove its optimization consistency and associated rate under mild regularity conditions. Such a rate tempers the curse-of-dimensionality noted earlier for existing black-box methods, enabling better optimization performance as $d$ increases. A methodological framework for robust implementation is presented later in Section \ref{sec:bommimpl}.

\subsection{The BOMM estimator}
\label{sec:estim}

Suppose the black-box function $f$ follows the general model:
\begin{equation}
f(\mathbf{x)} = \phi \circ h(\mathbf{x}), \quad h(\mathbf{x}) = h_1(x_1) + \cdots + h_d(x_d) + \zeta(\mathbf{x}),
\label{eq:gam}
\end{equation}
where $\phi \circ h (\mathbf{x}) = \phi\{h(\mathbf{x})\}$ denotes the composition of functions $\phi$ and $h$. Here, $\phi$ is a strictly monotone (and thus invertible) link function to be estimated from data, $h_1(x_1), \cdots, h_d(x_d)$ are additive functions on each input, and $\zeta(\mathbf{x})$ accounts for ``mild'' deviations from additivity for $h(\mathbf{x})$; more on this later. Without loss of generality, we presume in the following that $\phi$ is strictly monotonically increasing, as one can account for the monotonically decreasing case by reversing the sign on $h(\mathbf{x})$.


Note that, with $\zeta(\mathbf{x}) = 0$, the model \eqref{eq:gam} reduces to a \textit{generalized additive model} (GAM; \citealp{hastie2017generalized}) with unknown link function. GAMs are widely used in the statistical learning literature \citep{hastie2009elements,rudin2022interpretable} due to its flexible modeling framework and interpretability. A key appeal of a GAM is that it provides some relief from the curse-of-dimensionality for high-dimensional regression \citep{stone1986dimensionality}, by leveraging an additive structure after link transformation. Its form can further be justified via the well-known Kolmogorov-Arnold representation theorem (see, e.g., \citealp{tikhomirov1991representation}). The inclusion of $\zeta(\mathbf{x})$ enhances model flexibility by accounting for potential deviations from additivity in $h(\mathbf{x})$. This use of a carefully specified transformation for near-additive modeling has a long history in statistics, going back to the Box-Cox transformation \citep{box1964analysis} and ANOVA modeling \citep{wu2011experiments}. We adopt later a probabilistic modeling framework for \eqref{eq:gam} using the transformed approximate additive GP in \cite{lin2020transformation} to guide BOMM optimization.

Next, define the so-called transformed \textit{marginal mean} functions of $f$:
\begin{equation}
    m_l(x_l) = \int_{\mathcal{X}_{-l}} \phi^{-1} \circ f(\mathbf{x}) \; d\mathbf{x}_{-l}, \quad \quad l = 1, \cdots, d.
    \label{eq:margfunc}
\end{equation}
Here, $\mathbf{x}_{-l}$ refers to all variables in $\mathbf{x}$ except $x_l$, and $\mathcal{X}_{-l}$ denotes its domain. For an input $l$, such a function marginalizes the transformed response surface $\phi^{-1} \circ f$ over the remaining $d-1$ inputs. Given data $\mathbf{f}_n$ on $f$ at design points, let $\hat{m}_l(x_l)$ denote the estimator of this marginal mean function for input $l$; we will discuss how to construct such an estimator later. The BOMM estimator $\hat{\mathbf{x}}_n^* = (\hat{x}_{n,1}^*, \cdots, \hat{x}_{n,d}^*)$ for $\mathbf{x}^*$ then takes the following form:
\begin{equation}
\hat{x}_{n,l}^* = \underset{x_l}{\text{argmin}} \; \hat{m}_l(x_l), \quad l = 1, \cdots, d.
\label{eq:bomm}
\end{equation} 
In words, the $l$-th element of the BOMM estimator is taken as the minimizer of the estimated marginal mean function $\hat{m}_l(x_l)$ for the $l$-th input.

One way to intuit this estimator is as follows. Suppose $f$ follows a GAM (i.e., the model in \eqref{eq:gam} with $\zeta(\mathbf{x}) = 0$), and suppose its link function $\phi$ and additive functions $h_1, \cdots, h_d$ are known. Then the solution $\tilde{\mathbf{x}}^* = (\tilde{x}_1^*, \cdots, \tilde{x}_d^*)$ defined as $\tilde{x}_{l}^* = \underset{x_l}{\text{argmin}} \; m_l(x_l)$ must be a global optimizer of $f$; this follows from the fact that $\phi$ is monotonically increasing and $h(\mathbf{x})$ is additive. Given this constructive form for $\mathbf{x}^*$, the BOMM estimator \eqref{eq:bomm} targets the estimation of such a solution via the \textit{estimated} marginal mean functions $\hat{m}_l$. When such embedded near-additive structure is present, these marginal functions can be estimated efficiently even in high dimensions \citep{horowitz2007rate}; BOMM exploits this for efficient black-box optimization with limited data.

The BOMM estimator is motivated by a related problem of parameter design optimization for quality improvement \citep{wu2011experiments}. The latter targets the optimization of a physical system, e.g., the mean yield of a plot of land, under different control inputs with varying discrete levels. The goal is to identify a near-optimal input setting with limited physical experiment runs. \cite{wu1990sel} coined the term ``pick-the-winner'' as the simple strategy that selects the best observed setting within the limited runs. \cite{taguchi1986introduction} instead advocates for an alternate ``analysis of marginal means'' (AM) strategy. For each input $l$, the AM estimator selects the level that minimizes its marginal mean over such an input. The intuition is that such marginal effects in one dimension can be estimated more efficiently than the minimum over the full $d$-dimensional domain. Not surprisingly, when $f$ is near-additive (i.e., it has few interactions), AM is markedly more efficient than PW for system optimization with limited runs \citep{mak2019analysis}. Our BOMM estimator extends this for \textit{continuous} black-box optimization, coupled with a flexible \textit{generalized} additive modeling framework \eqref{eq:gam} that relaxes the near-additivity requirement on $f$.

It is also useful to contrast our approach with the earlier surrogate-based one-shot approaches, which directly optimize a standard surrogate model trained on data $\mathbf{f}_n$. As noted earlier, such approaches may yield mediocre performance given small sample sizes in high dimensions, when the surrogate fits poorly over the full space $\mathcal{X}$. Instead of relying on the full fitted surrogate, BOMM instead leverages the estimated marginal mean functions, which can be more easily inferred in high dimensions with limited data. As we see later, this can improve theoretical and empirical optimization performance by tempering the curse-of-dimensionality. A key reason lies in (i) the reduced function space for GAMs (and its generalization in \eqref{eq:gam}; see Theorem \ref{thm:gpbommconv}) compared to (ii) the highly nonparametric function spaces typically considered for surrogate modeling. Functions in the reduced space (i) permit efficient inference on marginal mean functions and its use for effective black-box optimization, whereas functions in (ii) do not permit the exploitation of such structure. Given the modeling flexibility of GAMs \citep{hastie2009elements,lin2020transformation}, this reduced space does not appear to be overly restrictive in our target problems and enables improved black-box optimization with limited data, as we see in later numerical experiments.

\subsection{Optimization consistency and rate}
\label{sec:gapanalysis}

We first investigate the convergence properties of BOMM. We will show that its optimality gap $|f(\hat{\mathbf{x}}_n^*) - f(\mathbf{x}^*)|$ converges at a rate of $\mathcal{O}_{P}(n^{-k/(4k+2)})$ when $f$ follows a GAM. Here, $k$ is the degree of differentiability on the link function $\phi$ and the additive functions $h_1, \cdots, h_d$. This considerably reduces the impact of dimensionality compared to the earlier $\mathcal{O}(n^{-\nu/d})$ rate for surrogate-based approaches that use a Mat\'ern-$\nu$ GP, thus facilitating effective optimization in high dimensions. As before, suppose the domain is $\mathcal{X} = \prod_{l=1}^d [L_l,U_l]$.

We make the following set of assumptions for theoretical analysis:

\begin{assumption}\label{assump:derivs_phi_h}
    The objective $f$ is in the form of a GAM (i.e., model \eqref{eq:gam} with $\zeta(\mathbf{x}) = 0$), with its link function $\phi$ and additive functions $h_1, \cdots, h_d$ $k$-times continuously differentiable with $k \ge 2$. Further assume:
    \begin{equation}
        \int [\phi^{(k)} (z)]^2 \; dz < \infty, \quad \int [h_l^{(k)}(x_l)]^2 \; dx_l < \infty, \quad \text{ for } l = 1, \cdots, d,
    \end{equation}
where $\phi^{(k)}$ is the $k$-th derivative of $\phi$, and the same for $h_l^{(k)}$.
\end{assumption}

\begin{assumption}\label{assump:transformation}
    The link function $\phi$ is strictly monotone increasing. 
\end{assumption}

\begin{assumption}\label{assump:design_points}
   Design points $\{\mathbf{x}_1, \cdots, \mathbf{x}_n\}$ are sampled i.i.d. from $\textup{Uniform}(\mathcal{X})$. 
\end{assumption}

\noindent Assumption \ref{assump:derivs_phi_h} provides necessary smoothness conditions on $\phi$ and $h_1, \cdots, h_d$, following \cite{horowitz2007rate}. Assumption \ref{assump:transformation} follows from the discussion in Section \ref{sec:estim}. Assumption \ref{assump:design_points} is a typical design assumption for theoretical analysis.

In the following analysis, we adopt the inference approach in \cite{horowitz2007rate} for estimating $\phi$ and $h_1, \cdots, h_d$ in a GAM (i.e., model \eqref{eq:gam} with $\zeta(\mathbf{x}) = 0$). There, these functions are jointly estimated via the constrained regularized least squares problem:
\begin{align}
\begin{split}
    \left(\widehat \phi, \widehat h_1, \cdots, \widehat h_d\right) = \underset{\phi, h_1, \cdots, h_d}{\arg\min} \; &\frac{1}{n} \sum_{i=1}^n \left\{f(\mathbf{x}_i) - \phi\left[h_1(x_{i,1}) + \cdots + h_d(x_{i,d})\right]\right\}^2  \\
    & \quad \quad + \lambda_n^2 \left( \left\{\int [\phi^{(k)}(z)]^2dz\right\}^{\nu_1/2} + \left\{\int [\phi'(z)]^2 dz \right\}^{\nu_2/2} \right),
    \label{NONPARAM_EST_HOROWITZ}
\end{split}
\end{align}
under the constraints:
\begin{align}
\sum_{l=1}^d \left[\int [h_l^{(k)}(x_l)]^2dx_l + \int [h_{l}'(x_l)]^2 dx_l \right] = 1, \quad \phi'(z) > 0,
\label{NONPARAM_EST_HOROWITZ2}
\end{align}
where $\nu_1 > 0$ and $\nu_2 > 0$ are fixed constants with $\nu_2 \ge \nu_1$. Here, the second term in \eqref{NONPARAM_EST_HOROWITZ} provides regularization on the smoothness of $\phi$ with penalty $\lambda_n$, and the first constraint in \eqref{NONPARAM_EST_HOROWITZ2} provides similar regularity on the additive functions $h_1, \cdots, h_d$. The second constraint in \eqref{NONPARAM_EST_HOROWITZ2} ensures the estimated $\phi$ is strictly monotone increasing. Following \cite{horowitz2007rate}, we adopt the following assumption on the penalty $\lambda_n$:
\begin{assumption}\label{assump:reg_parameter}
    $\lambda_n = \mathcal{O}_{P}\left(n^{-k/(2k+1)}\right)$ and $\lambda_n^{-1} = \mathcal{O}_{P}\left(n^{k/(2k+1)}\right)$.
\end{assumption}

With this, we now investigate the optimization performance of the BOMM estimator $\hat{\mathbf{x}}_n^* = (\hat{x}_{n,1}^*, \cdots, \hat{x}_{n,d}^*)$ in \eqref{eq:bomm}, where $\hat{m}_l$ follows from \eqref{eq:margfunc} with $\phi$ and $f$ set as $\hat{\phi}$ and $\hat{f}(\mathbf{x}) = \hat{\phi} \{\hat{h}_1(x_1) + \cdots + \hat{h}_d(x_d)\}$, respectively. As $f$ is presumed to be a GAM, this reduces to $\hat{x}_{n,l}^* = \underset{x_l}{\text{argmin}} \; \hat{h}_l(x_l)$. The following theorem establishes its optimization rate:

\begin{theorem}\label{THM:consistency_NP_BOMM}
Under Assumptions \ref{assump:derivs_phi_h} -- \ref{assump:reg_parameter} above, the BOMM estimator $\hat{\mathbf{x}}_n^*$ in \eqref{eq:bomm} using the inference approach in \eqref{NONPARAM_EST_HOROWITZ} and \eqref{NONPARAM_EST_HOROWITZ2} yields the following optimization rate:
\begin{equation}
|f(\hat{\mathbf{x}}_n^*) - f(\mathbf{x}^*)| = \mathcal{O}_{P}\left(n^{-k/(4k+2)}\right),
\label{eq:bommrate}
\end{equation}
where constants in $\mathcal{O}_{P}$ may depend on $f$ and dimension $d$.
\end{theorem}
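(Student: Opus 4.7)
The plan is to leverage the GAM structure to decompose the optimality gap into per-coordinate errors on the additive components $h_l$, then invoke the $L_2$ estimation rate of \cite{horowitz2007rate} and upgrade it to an $L_\infty$ rate via a one-dimensional Sobolev interpolation. First I would observe that, under the GAM ($\zeta \equiv 0$), $\hat{\phi}^{-1}\circ\hat{f}(\mathbf{x}) = \sum_{j=1}^d \hat{h}_j(x_j)$, so the estimated marginal mean collapses to $\hat{m}_l(x_l) = |\mathcal{X}_{-l}|\,\hat{h}_l(x_l) + C_l$ for a constant $C_l$ independent of $x_l$; hence $\hat{x}_{n,l}^* = \arg\min_{x_l}\hat{h}_l(x_l)$, and analogously $x^*_l = \arg\min_{x_l}h_l(x_l)$ since $\phi$ is strictly increasing and $h$ is additive. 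Because $\phi \in C^1$ on the compact range of $h$, it is Lipschitz there with some constant $L_\phi$, so by the triangle inequality
\[
\bigl|f(\hat{\mathbf{x}}_n^*) - f(\mathbf{x}^*)\bigr| \;\le\; L_\phi \sum_{l=1}^d \bigl|h_l(\hat{x}_{n,l}^*) - h_l(x_l^*)\bigr|.
\]

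Next, a standard argmin telescoping argument---using $\hat{h}_l(\hat{x}_{n,l}^*) \le \hat{h}_l(x_l^*)$---gives $0 \le h_l(\hat{x}_{n,l}^*) - h_l(x_l^*) \le 2\|\hat{h}_l - h_l\|_\infty$, reducing the task to an $L_\infty$ bound on $\hat{h}_l - h_l$. From \cite{horowitz2007rate}, under Assumptions \ref{assump:derivs_phi_h}--\ref{assump:reg_parameter}, the penalized estimator \eqref{NONPARAM_EST_HOROWITZ}--\eqref{NONPARAM_EST_HOROWITZ2} satisfies $\|\hat{h}_l - h_l\|_2 = \mathcal{O}_{P}(n^{-k/(2k+1)})$. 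The first constraint in \eqref{NONPARAM_EST_HOROWITZ2} forces $\int [\hat{h}_l'(x_l)]^2\,dx_l \le 1$, and Assumption \ref{assump:derivs_phi_h} gives $\|h_l'\|_2<\infty$, so $\|\hat{h}_l - h_l\|_{H^1}$ is uniformly bounded. A one-dimensional Gagliardo--Nirenberg-type inequality on the bounded interval $[L_l,U_l]$,
\[
\|g\|_\infty \;\le\; C\bigl(\|g\|_2 + \|g\|_2^{1/2}\|g'\|_2^{1/2}\bigr),
\]
derivable from $(g^2)'=2gg'$ and Cauchy--Schwarz starting from a point $x_0$ with $|g(x_0)|^2 \le \|g\|_2^2/|\mathcal{X}_l|$, then yields $\|\hat{h}_l - h_l\|_\infty = \mathcal{O}_{P}(n^{-k/(4k+2)})$. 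Chaining with the display above gives the claimed rate, with $d$ and $L_\phi$ absorbed into the implicit constant.

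The main obstacle will be rigorously transferring the $L_2$ rate of \cite{horowitz2007rate} to a componentwise bound on each $\hat{h}_l$, since their result is stated for a jointly identified tuple $(\hat{\phi}, \hat{h}_1,\ldots,\hat{h}_d)$ under a normalization resolving the scale and location ambiguity in the composition $\phi\circ\sum_l h_l$. Care will be needed (i) to confirm that this normalization does not shift the argmins defining BOMM---which it does not, since $\arg\min$ is invariant under additive constants and under monotone rescalings of $h_l$ absorbable into $\phi$---and (ii) to verify that the nonparametric rate transfers coordinatewise to each $\hat{h}_l$ rather than only to the composite $\hat{f}$ or to the aggregated sum $\sum_l\hat{h}_l$. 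A secondary, more routine task is tracking the dependence of constants on $d$ in the interpolation and Lipschitz steps, which enters the prefactor but not the exponent $k/(4k+2)$.
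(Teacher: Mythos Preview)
Your proposal is sound in spirit but takes a genuinely different route from the paper, and the ``main obstacle'' you flag is precisely where the paper's argument is cleaner. The paper does \emph{not} work componentwise on $\hat h_l - h_l$; instead it telescopes directly on the composite $f$ versus $\hat f := \hat\phi\circ(\hat h_1+\cdots+\hat h_d)$. The key observation you did not exploit is that, because $\hat\phi$ is constrained to be strictly increasing and $\sum_l \hat h_l$ is additive, the BOMM point $\hat{\mathbf{x}}_n^*$ (built from the coordinatewise minimizers of $\hat h_l$) is in fact a \emph{global} minimizer of $\hat f$. Thus $\hat f(\hat{\mathbf{x}}_n^*)\le \hat f(\mathbf{x}^*)$, and the standard three-term split yields $0\le f(\hat{\mathbf{x}}_n^*)-f(\mathbf{x}^*)\le 2\|\hat f-f\|_\infty$. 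Horowitz's Theorem~2.2 gives the $L_2$ rate for exactly this composite $\hat f - f$, so no componentwise transfer or identifiability bookkeeping is needed. The $L_2\to L_\infty$ upgrade is then done on $\hat\psi-\psi$ via Sobolev embedding combined with a Gagliardo--Nirenberg interpolation bounding first derivatives by second derivatives and the $L_2$ norm.

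What your route buys: the one-dimensional interpolation step $\|g\|_\infty\lesssim \|g\|_2+\|g\|_2^{1/2}\|g'\|_2^{1/2}$ is clean and unambiguous, and the derivative bound comes for free from the constraint \eqref{NONPARAM_EST_HOROWITZ2}. What it costs: you must extract a componentwise $L_2$ rate $\|\hat h_l - h_l\|_2=\mathcal{O}_P(n^{-k/(2k+1)})$ (up to an affine reparametrization $a h_l+c_l$) from \cite{horowitz2007rate}, whose stated result is for the composite; this is not immediate and is exactly the gap you identify in point~(ii). The paper's composite-level telescoping sidesteps that gap entirely, at the price of carrying out the interpolation argument on the $d$-variate function $\hat\psi-\psi$ rather than on one-dimensional pieces.
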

\noindent The proof of this theorem is provided in Appendix A.

Several useful insights can be gleaned from this theorem. First, as sample size $n \rightarrow \infty$, the optimality gap between the BOMM estimator $\hat{\mathbf{x}}_n^*$ and a global minimum $\mathbf{x}^*$ approaches zero, which proves the consistency of BOMM for global optimization. Second, as the degree of smoothness $k$ increases for the link and additive functions, the optimization rate in \eqref{eq:bommrate} also improves, which is not surprising. Finally and most importantly, the term in this rate relating to sample size $n$, namely $n^{-k/(4k+2)}$, does not depend on dimension $d$. This is in contrast to the $\mathcal{O}(n^{-\nu/d})$ optimization rate (discussed earlier in Section \ref{sec:exist}) for surrogate-based approaches using the Mat\'ern-$\nu$ GP, which deteriorates considerably as dimension $d$ increases. In this sense, BOMM can temper such a curse-of-dimensionality for existing black-box optimization methods. We show later that this translates to improved practical optimization performance over existing methods, for our target setting with limited runs in moderate-to-high dimensions.

\section{Practical Implementation}\label{sec:bommimpl}

With this theoretical foundation, we now present a practical framework for robust implementation of BOMM. We first leverage the transformed approximate additive GP in \cite{lin2020transformation} for probabilistic inference on the desired marginal mean functions to perform BOMM. We then propose a modification of BOMM, called BOMM+, for the setting where $h(\mathbf{x})$ may deviate from additivity. Finally, we provide convergence analysis for this GP-based implementation of BOMM and BOMM+.

\subsection{GP-based BOMM} \label{sec:mminf}
In what follows, we employ a (i) GP-based framework for inferring the model components in \eqref{eq:gam}. There are three reasons why this may be preferable to the (ii) optimization-based approach in \eqref{NONPARAM_EST_HOROWITZ}-\eqref{NONPARAM_EST_HOROWITZ2}. First, (ii) is largely used for theoretical analysis, and can be tricky to implement well as many hyperparameters need to be tuned. Second, (i) permits the \textit{probabilistic} inference of marginal mean functions, which we will leverage for a robust implementation of BOMM. Finally, the required smoothness conditions in \eqref{NONPARAM_EST_HOROWITZ}-\eqref{NONPARAM_EST_HOROWITZ2} can be imposed within (i) via a careful selection of GP kernels, as discussed next. We thus expect (i) to have a comparable optimization rate as shown for (ii) in Theorem \ref{THM:consistency_NP_BOMM}, although we prove just its consistency in Section \ref{sec:gpconv} for reasons discussed later.

To infer the model components in \eqref{eq:gam}, we adopt the transformed approximate additive GP (TAAG) in \cite{lin2020transformation}, which models $f$ as:
\begin{align}\label{eq:taag}
\begin{split}
f(\mathbf{x}) &= \phi_\lambda \left\{ A(\mathbf{x}) + Z(\mathbf{x}) \right\}, \quad A(\mathbf{x}) \sim \text{GP}\{\mu,k_{A}(\cdot,\cdot)\}, \quad Z(\mathbf{x}) \sim \text{GP}\{0,k_{Z}(\cdot,\cdot)\},\\
k_{A}(\mathbf{x},\mathbf{y}) & = \sigma^2(1-\eta)r_A(\mathbf{x}-\mathbf{y}), \quad r_A(\boldsymbol{\omega}) = \sum_{l=1}^d w_l r_{A,l}(\omega_l), \quad \sum_{l=1}^d w_l = 1,\\
k_Z(\mathbf{x},\mathbf{y}) &= \sigma^2 \eta r_Z(\mathbf{x}-\mathbf{y}),
\end{split}
\end{align}
where $\phi_\lambda$ is a link function parametrized by $\lambda$, and $A(\mathbf{x})$ and $Z(
\mathbf{x})$ are independent GPs. Here, $A(\mathbf{x})$ models the additive part of $h(\mathbf{x})$ in \eqref{eq:gam} via the additive kernel $r_A$ in \eqref{eq:taag}, where each additive term $r_{A,l}$ can be specified as a squared-exponential kernel or a Mat\'ern kernel that controls smoothness of the additive function $h_l$ in \eqref{eq:gam}. Next, $Z(\mathbf{x})$ models the residual non-additive part of $h(\mathbf{x})$ in \eqref{eq:gam}, namely $\zeta(\mathbf{x})$, via a zero-mean GP, where $r_Z$ is a non-additive kernel of choice. The parameter $\eta \in [0,1]$ controls the degree of non-additivity in $h(\mathbf{x})$: a near-zero value suggests that this function is near-additive, whereas a large value indicates considerable non-additivity. Finally, the parameter $\sigma^2 > 0$ serves as a global variance parameter on both $A(\mathbf{x})$ and $Z(\mathbf{x})$. 

For the link function $\phi_\lambda$, one choice (as adopted in \citealp{lin2020transformation}) is the well-known one-parameter Box-Cox transformation \citep{box1964analysis}. This can be defined as $\phi_\lambda^{-1} (z) = (1 - z^\lambda)/{\lambda}$ for $\lambda < 0$, $\phi_\lambda^{-1} (z) = \log z$ for $\lambda = 0$, and $(z^\lambda -1)/{\lambda}$ for $\lambda > 0$, where the parameter $\lambda \in \mathbb{R}$ is fit from data. Compared to its standard definition, the sign is flipped for the case of $\lambda < 0$ to ensure $\phi_{\lambda}$ is monotonically increasing; this does not affect its modeling capabilities. To use this transform, the black-box function $f$ needs to be strictly positive. This can be achieved in practice by adding an appropriately large constant on $f$, which does not affect its optimization. While one can employ a more flexible transformation choice (e.g., the two-parameter transform in \citealp{yeo2000new}), we find that the above Box-Cox transformation works quite well in later experiments.

With this, the marginal mean functions $\{m_l(x_l)\}_{l=1}^d$ can then be inferred as follows. Suppose we know the model parameters $\lambda$, $\mu$, $\sigma^2$, $\eta$ and $\mathbf{w} = (w_1, \cdots, w_d)$, along with the kernel length-scale parameters for $r_A$ and $r_Z$ (denoted as $\boldsymbol{\theta}_{A}$ and $\boldsymbol{\theta}_{Z}$, respectively); these will be estimated from data later. Denote the above parameter set by $\boldsymbol{\Theta}$. Recall from \eqref{eq:margfunc} that $m_l(x_l) = \int_{\mathcal{X}_{-l}} \phi_{\lambda}^{-1} \circ f(\mathbf{x}) \; d\mathbf{x}_{-l}$. Conditional on observed data $\mathbf{f}_n$, the following proposition shows that the posterior distribution of $m_l(\cdot)$ follows a Gaussian process:
\begin{proposition}\label{prop:marg_mean_dist}
Adopt the modeling framework in \eqref{eq:taag}, and suppose model parameters $\boldsymbol{\Theta}$ are known. Conditional on data $\mathbf{f}_n = [f(\mathbf{x}_1), \cdots, f(\mathbf{x}_n)]$, the marginal mean function $m_l(\cdot)$ has the posterior distribution $m_l(\cdot)|\mathbf{f}_n \sim \textup{GP}\{\mu_{n,l}(\cdot),k_{n,l}(\cdot,\cdot)\}$, where:
   \begin{align}
   \label{eq:postmarg1}
    \begin{split}
\mu_{n,l}(x_l) = \int_{\mathcal{X}_{-l}} \mu_{n, \phi_\lambda^{-1} \circ f}(\mathbf{x})d\mathbf{x}_{-l}, \quad k_{n,l}(x_l,x_l') = \int_{\mathcal{X}_{-l}}\int_{\mathcal{X}_{-l}} k_{n,\phi_\lambda^{-1} \circ f}(\mathbf{x}, \mathbf{x}') d\mathbf{x}_{-l}d\mathbf{x}'_{-l}.
    \end{split}
    \end{align}
\noindent Here, $\mu_{n, \phi_\lambda^{-1} \circ f}(\cdot)$ and $k_{n, \phi_\lambda^{-1} \circ f}(\cdot,\cdot)$ are the posterior mean and covariance functions of $\phi_\lambda^{-1} \circ f$ conditional on $\mathbf{f}_n$, given by:
\small
\begin{align}\label{eq:postmarg2}
\begin{split}
\mu_{n,\phi_\lambda^{-1} \circ f}(\mathbf{x}) &= \mu + \left( (1-\eta){\mathbf{r}_{n,A}}(\mathbf{x}) + \eta \mathbf{r}_{n,Z}(\mathbf{x}) \right)^\top \left((1-\eta)\mathbf{R}_{n,A} + \eta \mathbf{R}_{n,Z} \right)^{-1} \left(\phi_{\lambda}^{-1}(\mathbf{f}_n)-\mu \mathbf{1}\right),\\
k_{n,\phi_\lambda^{-1} \circ f}(\mathbf{x},\mathbf{x}') &= \sigma^2 \left(1-\tilde{\mathbf{r}}_n(\mathbf{x})^{\top}\left((1-\eta) \mathbf{R}_{n,A}+\eta \mathbf{R}_{n,Z}\right)^{-1} \tilde{\mathbf{r}}_n(\mathbf{x}') 
\right),
\end{split}
\end{align}
\normalsize
where $\mathbf{r}_{n,A}(\mathbf{x}) = [r_A(\mathbf{x}_i-\mathbf{x})]_{i=1}^n$, $\mathbf{r}_{n,Z}(\mathbf{x}) = [r_Z(\mathbf{x}_i-\mathbf{x})]_{i=1}^n$, $\tilde{\mathbf{r}}_n(\mathbf{x}) = (1-\eta)\mathbf{r}_{n,A}(\mathbf{x}) + \eta \mathbf{r}_{n,Z}(\mathbf{x})$, $\mathbf{R}_{n,A} = [r_A(\mathbf{x}_i-\mathbf{x}_j)]_{i,j=1}^n$ and $\mathbf{R}_{n,Z} = [r_Z(\mathbf{x}_i-\mathbf{x}_j)]_{i,j=1}^n$.
\end{proposition}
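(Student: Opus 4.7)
The plan is to exploit the fact that under the TAAG model, the transformed response $g(\mathbf{x}) := \phi_\lambda^{-1} \circ f(\mathbf{x}) = A(\mathbf{x}) + Z(\mathbf{x})$ is itself a Gaussian process, and that $m_l$ is a continuous linear functional of $g$. The result will then follow from (i) the standard GP conditioning formula applied to $g$, and (ii) the preservation of Gaussianity under linear operations, combined with Fubini's theorem to interchange the conditional expectation with integration over $\mathcal{X}_{-l}$.

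\textbf{Step 1: Reduce to the untransformed GP $g$.} Since $A$ and $Z$ are independent GPs, their sum $g(\mathbf{x}) = A(\mathbf{x}) + Z(\mathbf{x})$ is a GP with mean $\mu$ and kernel $k_g(\mathbf{x},\mathbf{x}') = \sigma^2\{(1-\eta)r_A(\mathbf{x}-\mathbf{x}') + \eta r_Z(\mathbf{x}-\mathbf{x}')\}$. Because $\phi_\lambda$ is strictly monotone (hence a bijection on its range), the $\sigma$-algebra generated by $\mathbf{f}_n = [f(\mathbf{x}_i)]_{i=1}^n$ coincides with that generated by $g(\mathbf{x}_i) = \phi_\lambda^{-1}(f(\mathbf{x}_i))$, so conditioning on $\mathbf{f}_n$ is equivalent to conditioning on the values $\phi_\lambda^{-1}(\mathbf{f}_n)$ of $g$ at the design points. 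Applying the standard GP predictive equations (as in \eqref{eq:gppred}) to $g$ with kernel $k_g$ then gives $g(\cdot)\mid\mathbf{f}_n \sim \textup{GP}\{\mu_{n,\phi_\lambda^{-1}\circ f}, k_{n,\phi_\lambda^{-1}\circ f}\}$ with mean and covariance exactly as stated in \eqref{eq:postmarg2}.

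\textbf{Step 2: Gaussianity of $m_l(\cdot)\mid\mathbf{f}_n$.} For any finite collection of points $x_l^{(1)},\ldots,x_l^{(J)}$, each integral $m_l(x_l^{(j)}) = \int_{\mathcal{X}_{-l}} g(x_l^{(j)},\mathbf{x}_{-l})\, d\mathbf{x}_{-l}$ is an $L^2$-limit of Riemann sums of the jointly Gaussian variables $\{g(x_l^{(j)},\mathbf{x}_{-l})\}$, hence is jointly Gaussian with $\mathbf{f}_n$ (using continuity of the kernels $r_A, r_Z$ on the compact rectangular $\mathcal{X}$ to guarantee mean-square continuity of $g$ and thus $L^2$-convergence). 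Consequently the conditional distribution of $(m_l(x_l^{(1)}),\ldots,m_l(x_l^{(J)}))$ given $\mathbf{f}_n$ is multivariate normal, proving $m_l(\cdot)\mid\mathbf{f}_n$ is a GP.

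\textbf{Step 3: Compute the posterior moments via Fubini.} It remains to identify the posterior mean and covariance. Mean-square continuity of $g$ and boundedness of $\mathcal{X}$ yield integrability conditions sufficient for Fubini's theorem for conditional expectations. Thus,
\begin{align*}
\mathbb{E}[m_l(x_l)\mid \mathbf{f}_n] &= \int_{\mathcal{X}_{-l}} \mathbb{E}[g(\mathbf{x})\mid \mathbf{f}_n]\, d\mathbf{x}_{-l} = \int_{\mathcal{X}_{-l}} \mu_{n,\phi_\lambda^{-1}\circ f}(\mathbf{x})\, d\mathbf{x}_{-l},
\end{align*}
and similarly
\begin{align*}
\textup{Cov}\bigl(m_l(x_l),m_l(x_l')\mid \mathbf{f}_n\bigr) &= \int_{\mathcal{X}_{-l}}\!\int_{\mathcal{X}_{-l}} \textup{Cov}\bigl(g(\mathbf{x}),g(\mathbf{x}')\mid \mathbf{f}_n\bigr)\, d\mathbf{x}_{-l}d\mathbf{x}'_{-l} \\
&= \int_{\mathcal{X}_{-l}}\!\int_{\mathcal{X}_{-l}} k_{n,\phi_\lambda^{-1}\circ f}(\mathbf{x},\mathbf{x}')\, d\mathbf{x}_{-l}d\mathbf{x}'_{-l},
\end{align*}
which matches \eqref{eq:postmarg1}.

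\textbf{Main obstacle.} The essentially technical points are Step 1's invertibility argument and the Fubini exchange in Step 3. Neither is hard: $\phi_\lambda$ is monotone by assumption, and compactness of $\mathcal{X}$ together with continuity of $r_A,r_Z$ delivers uniform bounds on $|g|$ (in $L^2$) and on the posterior covariance kernel, which suffices to swap $\mathbb{E}[\cdot\mid\mathbf{f}_n]$ with the integrals. The only place one must be careful is in recognizing that the conditional expectation commutes with $\int_{\mathcal{X}_{-l}}$ because the integrand is an adapted, jointly Gaussian field with integrable second moments; this is a standard consequence of the conditional Fubini theorem.
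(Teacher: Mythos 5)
Your proposal is correct and follows essentially the same route as the paper's proof: reduce to the transformed GP $\phi_\lambda^{-1}\circ f$, invoke preservation of Gaussianity under linear functionals (which the paper cites from the literature while you sketch via $L^2$-limits of Riemann sums), and identify the posterior mean and covariance by a Fubini interchange. The extra details you supply (the bijectivity of $\phi_\lambda$ for conditioning and the mean-square continuity argument) are harmless elaborations of the same argument.
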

\noindent The proof of this proposition is provided in Appendix C. 

With this, the GP-based BOMM estimator then takes the form:
\begin{equation} \label{eq:bommgp}
\hat{\mathbf{x}}_n^* \coloneq (\hat{x}^*_{n,1}, \cdots, \hat{x}^*_{n,d}), \quad \quad \hat{x}^*_{n,l} = \underset{x_l}{\arg\min} \;  {\mu}_{n,l}(x_l), \quad \quad l = 1, \cdots, d.
\end{equation}
This can be further simplified when $\{r_{A,l}\}_{l=1}^d$ and $r_Z$ follow the squared-exponential form:
\begin{equation}\label{eq:kern_sqexp}
r_{A,l}(x_l,x_l') = \exp\left\{-\left(\frac{x_l-x_l'}{\theta_{A,l}} \right)^2 \right\}, \quad \quad r_{Z}(\mathbf{x},\mathbf{x}') = \exp\left\{-\sum_{l=1}^d \left(\frac{x_l-x_l}{\theta_{Z,l}} \right)^2 \right\}
\end{equation}
where $\boldsymbol{\theta}_A = (\theta_{A,1}, \cdots, \theta_{A,d})$ and $\boldsymbol{\theta}_Z = (\theta_{Z,1}, \cdots, \theta_{Z,d})$ are their length-scale parameters. With such kernels, the following proposition gives a closed-form objective for \eqref{eq:bommgp}:
\begin{proposition}\label{prop:marg_mean_sqexp}
Adopt the same conditions as Proposition \ref{prop:marg_mean_dist}. Under the squared-exponential kernels in \eqref{eq:kern_sqexp}, the BOMM estimator in \eqref{eq:bommgp} reduces to:
\small
\begin{equation}\label{eq:postmarg3}
\hat{x}^*_{n,l} = \underset{x_l}{\arg\min} \; \left[ (1- \eta) w_l \textup{Vol}(\mathcal{X}_{-l})\sum_{i=1}^n q_i\exp\left\{- \left( \frac{x_l-x_{i,l}}{\theta_{A,l}}\right)^2\right\} +  \pi^{\frac{d-1}{2}}\eta \sum_{i = 1}^n p_{i,l} q_i \left\{- \left( \frac{x_l-x_{i,l}}{\theta_{Z,l}}\right)^2\right\} \right],
\end{equation}
\normalsize
where $\mathbf{x}_i = (x_{i,1}, \cdots, x_{i,d})$ is the $i$-th design point. Here, $p_{i,l}$ and $\mathbf{q} = [q_1, \cdots, q_n]$ follow:
\begin{align*} \label{eq:piqi}
p_{i,l} = \prod_{j \neq l} \theta_{Z,j} \left(\tilde\Phi_{i,j}(U_j) - \tilde\Phi_{i,j}(L_j) \right), \quad \mathbf{q} = \left((1-\eta) \mathbf{R}_{n,A} +  \eta \mathbf{R}_{n,Z} \right)^{-1} \left(\phi_{\lambda}^{-1}(\mathbf{f}_n)-\mu \mathbf{1}\right),
\end{align*}
where $\tilde\Phi_{i,j}$ is the c.d.f. of $\mathcal{N}(x_{i,j},\theta_{Z,j}^2/2)$ and $\textup{Vol}(\mathcal{X}_{-l}) = \prod_{j \neq l} (U_j-L_j)$.
\end{proposition}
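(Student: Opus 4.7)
The plan is to start from the characterization in Proposition \ref{prop:marg_mean_dist}, substitute the squared-exponential kernels of \eqref{eq:kern_sqexp}, and evaluate the resulting $(d-1)$-dimensional integrals in closed form by exploiting (i) the additive decomposition of $r_A$ and (ii) the product (separable) structure of $r_Z$. Writing $\mathbf{q} = \bigl((1-\eta)\mathbf{R}_{n,A} + \eta \mathbf{R}_{n,Z}\bigr)^{-1}\bigl(\phi_\lambda^{-1}(\mathbf{f}_n) - \mu\mathbf{1}\bigr)$, Proposition \ref{prop:marg_mean_dist} gives
\[
\mu_{n,l}(x_l) = \int_{\mathcal{X}_{-l}} \Bigl\{\mu + (1-\eta)\,\mathbf{r}_{n,A}(\mathbf{x})^{\top}\mathbf{q} + \eta\,\mathbf{r}_{n,Z}(\mathbf{x})^{\top}\mathbf{q}\Bigr\}\,d\mathbf{x}_{-l}.
\]
The constant $\mu$ contributes $\mu\,\textup{Vol}(\mathcal{X}_{-l})$, which is independent of $x_l$ and may be dropped from the $\arg\min$. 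It then suffices to analyze the two kernel terms separately.

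For the additive piece, $r_A(\mathbf{x}_i - \mathbf{x}) = \sum_{l'=1}^d w_{l'} \exp\{-((x_{i,l'}-x_{l'})/\theta_{A,l'})^2\}$. For each $l' \neq l$, the integrand depends only on $x_{l'}$ (not on $x_l$), so integrating over $\mathcal{X}_{-l}$ yields a scalar independent of $x_l$ and therefore contributes nothing to the $\arg\min$. Only the $l' = l$ summand retains $x_l$-dependence, and its integral over $\mathcal{X}_{-l}$ trivially yields a factor $\textup{Vol}(\mathcal{X}_{-l})$. Thus the additive part reduces, up to $x_l$-independent constants, to
\[
(1-\eta)\,w_l\,\textup{Vol}(\mathcal{X}_{-l})\sum_{i=1}^n q_i \exp\!\Bigl\{-\Bigl(\tfrac{x_l - x_{i,l}}{\theta_{A,l}}\Bigr)^2\Bigr\}.
\]

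For the non-additive piece, the squared-exponential form factorizes as $r_Z(\mathbf{x}_i - \mathbf{x}) = \prod_{l'=1}^d \exp\{-((x_{i,l'}-x_{l'})/\theta_{Z,l'})^2\}$, so the $(d-1)$-fold integral decouples into one-dimensional integrals. For each $j \neq l$, I would recognize $\exp\{-((x_{i,j}-x_j)/\theta_{Z,j})^2\}$ as proportional to the density of $\mathcal{N}(x_{i,j}, \theta_{Z,j}^2/2)$, which gives
\[
\int_{L_j}^{U_j}\!\exp\!\Bigl\{-\Bigl(\tfrac{x_{i,j}-x_j}{\theta_{Z,j}}\Bigr)^2\Bigr\}\,dx_j = \sqrt{\pi}\,\theta_{Z,j}\bigl(\tilde\Phi_{i,j}(U_j) - \tilde\Phi_{i,j}(L_j)\bigr),
\]
with $\tilde\Phi_{i,j}$ the c.d.f.\ of $\mathcal{N}(x_{i,j},\theta_{Z,j}^2/2)$. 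Taking the product over $j \neq l$ collects the $\sqrt{\pi}$'s into $\pi^{(d-1)/2}$ and the remaining factors into $p_{i,l} = \prod_{j\neq l}\theta_{Z,j}(\tilde\Phi_{i,j}(U_j)-\tilde\Phi_{i,j}(L_j))$, while the surviving $l$-th factor gives $\exp\{-((x_l-x_{i,l})/\theta_{Z,l})^2\}$. Summing against $\mathbf{q}$ and multiplying by $\eta$ yields the second term in \eqref{eq:postmarg3}.

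Combining the two contributions and discarding the $\mu\,\textup{Vol}(\mathcal{X}_{-l})$ constant produces exactly the objective displayed in \eqref{eq:postmarg3}. The main computational step is the Gaussian-integral identification for the separable $r_Z$ kernel; everything else is just swapping sums and integrals (justified by finiteness of the kernel) and collecting constants. I do not anticipate serious obstacles, as the only subtlety is ensuring the symmetry $r_Z(\mathbf{x}_i - \mathbf{x}) = r_Z(\mathbf{x} - \mathbf{x}_i)$ (which is immediate from the squared-exponential form) so that the Gaussian centered at $x_{i,j}$ interpretation is unambiguous.
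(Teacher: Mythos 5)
Your proposal is correct and follows essentially the same route as the paper's proof: split the posterior mean integral into the constant $\mu\,\textup{Vol}(\mathcal{X}_{-l})$ term, the additive $r_A$ part (where only the $l$-th summand retains $x_l$-dependence and the $j\neq l$ summands reduce to $x_l$-independent constants), and the separable $r_Z$ part evaluated via one-dimensional Gaussian integrals yielding the $\tilde\Phi_{i,j}$ c.d.f.\ differences, then discard all $x_l$-independent terms from the $\arg\min$. Your handling of the cross terms (dropping them because they do not depend on $x_l$, rather than asserting positivity) is, if anything, slightly cleaner than the paper's phrasing.
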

\noindent Similar expressions can be derived for other kernel choices, e.g., the Mat\'ern kernel, but may be more involved. With this closed-form objective, one can easily optimize the one-dimensional problem in \eqref{eq:postmarg3} (e.g., via grid search) to obtain the BOMM estimator $\hat{x}_{n,l}^*$. 

The above procedure, however, requires the estimation of parameters $\boldsymbol{\Theta}$ from data. To do this, we employ the same empirical Bayes approach as \cite{lin2020transformation}. This approach first assigns the following non-informative priors on model parameters $[\lambda,\mu,\tau^2,\delta,\mathbf{w},\boldsymbol{\theta}_{A},\boldsymbol{\theta}_{Z}] \propto 1$, where $\tau^2 = \sigma^2(1-\eta)$ and $\delta = \eta/(1-\eta)$ reparametrize $(\sigma,\eta)$. It then finds the fitted parameters $\hat{\boldsymbol{\Theta}}$ that maximize the corresponding marginal likelihood given observed data $\mathbf{f}_n$. Details on this procedure can be found in Section 3 of \cite{lin2020transformation}. With this in hand, the GP-based BOMM estimator \eqref{eq:bommgp} can then be computed using the plug-in estimate $\boldsymbol{\Theta} = \hat{\boldsymbol{\Theta}}$.

Algorithm \ref{ALG2:TAG_AM} summarizes each step of the GP-based BOMM optimization procedure, with a diagnostic procedure described later. First, the black-box simulator $f$ is evaluated at designed input points $\mathbf{x}_1, \cdots, \mathbf{x}_n$. In later experiments, we find that maximin Latin hypercube designs \citep{morris1995exploratory} work quite well: they not only provide desirable space-filling performance, but also offer good projective properties onto each input, which is important for accurate estimation of the additive structure in \eqref{eq:gam}. Next, one fits the transformed approximate additive GP in \cite{lin2020transformation}; our implementation makes use of the authors' \textsc{R} package \texttt{TAG} \citep{R_TAG}. With the fitted model, one then constructs the BOMM estimator via the optimization formulation \eqref{eq:bommgp}. In our implementation, this optimization is performed via one-dimensional grid searches. 

\begin{algorithm}[!t]
    \caption{GP-based BOMM+}
     \textbf{Input}: Sample size $n$ (from run budget), threshold $T$, significance level $\rho$
    \begin{algorithmic}[1]

        \State Construct a maximin Latin hypercube design $\{\mathbf{x}_i\}_{i=1}^n$, and evaluate $f$ on such points.
        
        \State Fit the transformed approximate additive GP in \cite{lin2020transformation} and obtain parameter estimates $\hat{\boldsymbol{\Theta}}$.

        \State Using $\hat{\boldsymbol{\Theta}}_{-\eta}$, compute the plug-in estimate of the posterior probability $\xi = \mathbb{P}(\eta > T|\hat{\boldsymbol{\Theta}}_{-\eta},\text{data})$ via \eqref{eq:eta}.

        \If{$\xi \leq 1-\rho$}

        \For{$l = 1,\cdots,d$}
        \State $\bullet$ Optimize the BOMM estimator $\hat{x}^*_{n,l}$ via \eqref{eq:bommgp}.
        \EndFor
        
        \Else{
        
        \For{$l = 1,\cdots,d$}
        \State $\bullet$ Specify the tail probability $\alpha^*$ following Appendix F.
        \State $\bullet$ Optimize the tail BOMM estimator $\hat{x}^*_{n,l} = \hat{x}^*_{n,\alpha^*,l}$ via \eqref{eq:bomtmgp}.
        \EndFor
        }
        \EndIf

   \noindent \hspace{-0.8cm} \textbf{Output}: $\hat{\textbf{x}}_n^* = (\hat{x}^*_{n,1}, \cdots, \hat{x}^*_{n,d})$
    \end{algorithmic}
     \label{ALG2:TAG_AM}
\end{algorithm}

\subsection{GP-based BOMM+}\label{subsec:atm}
Recall that a key motivation for BOMM is its use of marginal mean functions that can be estimated efficiently when $h = \phi^{-1} \circ f$ is near-additive, i.e., it has few interaction effects. When considerable interactions are present, a modification of BOMM using marginal \textit{tail} means can be used for robust performance. We present next a diagnostic approach for detecting such non-additivity, followed by a marginal tail means modification for estimating $\mathbf{x}^*$. We refer to this approach with diagnostic modification as BOMM+ hereafter.

Recall that the parameter $\eta$ dictates the level of non-additivity in the model \eqref{eq:taag}: the larger $\eta$ is, the greater its non-additivity. Thus, a reasonable diagnostic for non-additivity might be the posterior probability that $\eta$ is large, i.e., $\mathbb{P}(\eta > T|\text{data})$ for a desired threshold $T > 0$. Suppose for now that all model parameters in $\boldsymbol{\Theta}$ except $\eta$ (denoted $\boldsymbol{\Theta}_{-\eta}$) are known. Then the posterior distribution of $\eta$ takes the form:

\begin{equation}\label{eq:eta}
[\eta|\boldsymbol{\Theta}_{-\eta},\text{data}] \propto s^{-n} \eta^{ \delta}(1-\eta) \text{det}\{(1-\eta) \mathbf{R}_{n,A} + \eta \mathbf{R}_{n,Z} \}^{-\frac{1}{2}},
\end{equation}
where $s^2 = n^{-1}(\phi_{\lambda}^{-1}(\mathbf{f}_n) - {\mu}\mathbf{1})^\top \left\{(1-\eta) {\mathbf{R}}_{n,A} + \eta {\mathbf{R}}_{n,Z} \right\}^{-1} (\phi_{\lambda}^{-1}(\mathbf{f}_n) - {\mu}\mathbf{1})$. With this, the diagnostic probability $\mathbb{P}(\eta > T|\boldsymbol{\Theta}_{-\eta},\text{data})$ can be easily computed via Monte Carlo methods. In our later implementation, this is computed via the self-normalized importance sampling approach in Chapter 9 of \cite{owen2016monte}. One can then infer whether considerable non-additivity is present by seeing whether this probability is above a certain cut-off $1 - \rho$; in later experiments, we used a threshold of $T = 0.4$ and a significance level of $\rho=0.3$, which seemed to work well.

Of course, in practice the parameters $\boldsymbol{\Theta}_{-\eta}$ are unknown. From a Bayesian perspective, one would ideally sample from the full posterior distribution $[\boldsymbol{\Theta}|\text{data}]$, then marginalize over $\boldsymbol{\Theta}_{-\eta}$ to compute the diagnostic probability $\mathbb{P}(\eta > T|\text{data})$. Such a fully Bayesian approach, however, may be expensive given the many parameters in $\boldsymbol{\Theta}$. We adopt an alternate strategy using the empirical Bayes estimates $\hat{\boldsymbol{\Theta}}_{-\eta}$ from the previous subsection, which can be efficiently optimized via the \textsc{R} package \texttt{TAG} \citep{R_TAG}. In particular, we employ the plug-in estimator $\mathbb{P}(\eta > T|\hat{\boldsymbol{\Theta}}_{-\eta},\text{data})$, where given $\hat{\boldsymbol{\Theta}}_{-\eta}$, one can compute this probability from \eqref{eq:eta} via, e.g., self-normalized importance sampling \citep{owen2016monte}.

With this non-additivity diagnostic in hand, the BOMM estimator from the previous subsection should be used when $\mathbb{P}(\eta > T|\hat{\boldsymbol{\Theta}}_{-\eta},\text{data}) < 1 - \rho$, as this suggests there is near-additivity in $h$ that can be exploited. When this is not the case, there is some evidence for considerable non-additivity in $h$, in which case we adopt the following \textit{tail} marginal mean estimator. The intuition is as follows. Even when $h$ is not globally near-additive, its degree of additivity should increase as one hones in locally around its minimizer $\mathbf{x}^*$. Following \cite{mak2019analysis}, we employ a marginal tail means approach to exploit such \textit{local} additivity for optimization. In place of the posterior marginal mean ${\mu}_{n,l}(x) = \mathbb{E}[m_l(x)|\text{data}]$, we instead employ the posterior marginal tail mean ${\mu}_{n,l}^{[\alpha]}(x) = \mathbb{E}[m_l(x)|\text{data},m_l(x) \leq Q_{l}^{[\alpha]}(x)]$, where $Q_{l}^{[\alpha]}(x)$ is the $100\alpha\%$-percentile of the posterior distribution $[m_l(x)|\text{data}]$. Such a tail mean discards the top $100(1-\alpha)$\% of this posterior distribution before evaluating its expectation; this removes the part of the posterior that is more sensitive to large objective values in the data $\mathbf{f}_n$, allowing it to better exploit local additivity of $h$ near $\mathbf{x}^*$. Note that, with $\alpha = 1$, this reduces to the original posterior marginal mean $\mu_{n,l}(x)$. A similar tail means approach was employed in \cite{mak2019analysis} for discrete black-box optimization.

Since the posterior distribution of $m_l(x)$ is Gaussian (Proposition \ref{prop:marg_mean_dist}), its posterior tail mean function further admits the closed form $\mu_{n,l}^{[\alpha]}(x_l) = \mu_{n,l}(x_l) - \sqrt{k_{n,l}(x_l,x_l)}{\varphi\left(z_\alpha \right)}/{\alpha}$, where $z_\alpha$ is the $100\alpha\%$-percentile of the standard Gaussian and $\varphi$ is the standard Gaussian density. The resulting tail BOMM estimator is then given as:
\begin{equation} \label{eq:bomtmgp}
\hat{\mathbf{x}}_{n,\alpha}^* \coloneq (\hat{x}^*_{n,\alpha,1}, \cdots, \hat{x}^*_{n,\alpha,d}), \quad \quad \hat{x}^*_{n,\alpha,l} = \underset{x_l}{\arg\min} \;  {\mu}_{n,l}^{[\alpha]}(x_l), \quad \quad l = 1, \cdots, d.
\end{equation}
As before, one can employ the plug-in estimate $\boldsymbol{\Theta} = \hat{\boldsymbol{\Theta}}$ for evaluating \eqref{eq:bomtmgp}. We show later in experiments that such a tail estimator can provide robust optimization under non-additivity in $h$. Algorithm \ref{ALG2:TAG_AM} summarizes the full BOMM+ procedure with this non-additivity diagnostic. Appendix F provides guidance on how $\alpha$ can be selected in implementation.

\subsection{Optimization consistency}
\label{sec:gpconv}

We now establish the optimization consistency of the GP-based BOMM and BOMM+. The key difference between this analysis and that in Section \ref{sec:gapanalysis} lies in the considered function space for $f$. The earlier analysis from Section \ref{sec:gapanalysis} establishes an optimization rate for BOMM when $f$ follows a GAM, i.e., the model \eqref{eq:gam} with $\zeta(\mathbf{x}) = 0$. The following analysis shows that the GP-based BOMM and BOMM+ are consistent, i.e., its optimality gap $f(\hat{\mathbf{x}}_n^*) - f(\mathbf{x}^*)$ goes to zero, under mild deviations of $\zeta(\mathbf{x})$ from zero, i.e., under mild deviations from additivity for $h$ in \eqref{eq:gam}. Optimization rates for the GP-based BOMM and BOMM+ are more difficult to establish since there is little work on their corresponding function space; we will explore this as future work.

As before, suppose $\mathcal{X} = \prod_{l=1}^d [L_l,U_l]$. We presume the following form for $f$:
\begin{assumption}
\label{assump:rkhs}
The objective $f$ lies on the function space $\mathcal{F}_{\lambda}$, defined as:
\vspace{-0.5cm}
\begin{equation}\label{eq:flambda}
\mathcal{F}_{\lambda} = \{f: f = \phi_{\lambda} \circ h, \; h \in \mathcal{H}_{\rm TAAG}\}, \quad \lambda \in \mathbb{R}.
\vspace{-0.5cm}
\end{equation}

\end{assumption}
\noindent Here, $\mathcal{H}_{\rm TAAG}$ is the reproducing kernel Hilbert space (RKHS; \citealp{aronszajn1950theory}) of the kernel $k_A + k_Z$ on domain $\mathcal{X}$, corresponding to the GP for $A(\mathbf{x}) + Z(\mathbf{x})$ in \eqref{eq:taag}. This RKHS takes the form $\mathcal{H}_{\text{TAAG}}= \left\{h: h = h_A + h_Z,  h_A \in \mathcal{H}_{k_A}, h_Z \in \mathcal{H}_{k_Z} \right\}$ equipped with the norm $\|h\|_{\mathcal{H}_{\text{TAAG}}} = \underset{h = h_A + h_Z, h_A \in \mathcal{H}_{k_A}, h_Z \in \mathcal{H}_{k_Z}}{\min}\left(\|h_A\|_{\mathcal{H}_{k_A}} + \|h_Z\|_{\mathcal{H}_{k_Z}}  \right)$, where $\mathcal{H}_{k_A}$ and $\mathcal{H}_{k_Z}$ correspond to the RKHS for kernels $k_A$ and $k_Z$, respectively. Note that $\mathcal{H}_{\text{TAAG}}$ consists of functions that are \textit{non-additive} for $h$ due to the presence of $h_Z$.

We further make the following set of assumptions for theoretical analysis:

\begin{assumption}\label{assump:kernel}
The kernels $k_A$ and $k_Z$ in the RKHS $\mathcal{H}_{\textup{TAAG}}$ take the squared-exponential form \eqref{eq:taag} and \eqref{eq:kern_sqexp}. The GP modeling framework \eqref{eq:taag} for BOMM adopts the same kernels, with no misspecification of kernel hyperparameters or $\lambda$.
\end{assumption}

\begin{assumption}\label{assump:positive}
     The objective $f$ satisfies $f(\mathbf{x}) \in  [f^*,f^+]$ for $\mathbf{x} \in \mathcal{X}$, where $f^* = f(\mathbf{x}^*) > 0$ is the global minimum and $f^+ < \infty$ is an upper bound.
 \end{assumption}
\begin{assumption}\label{assump:unique}
    The objective $f$ admits a unique minimizer $\mathbf{x}^* \in \mathcal{X}$.
\end{assumption}

\begin{assumption}\label{assump:dom}
The objective $f$ satisfies the so-called ``first-order dominating'' condition: 
\begin{equation}\label{eq:fodom}
\arg\min_{\mathbf{x} \in \mathcal{X}} h(\mathbf{x}) = \arg\min_{\mathbf{x} \in \mathcal{X}} \sum_{l=1}^d \int h(\mathbf{x}) d\mathbf{x}_{-l}, \quad h = \phi_{\lambda}^{-1} \circ f.
\end{equation}  
\end{assumption}

\noindent Assumption \ref{assump:kernel} on kernel specification is typical for GP analysis (see, e.g., \citealp{ritter2000average}), although recent work \citep{wang2020prediction,wynne2021convergence} has explored the case of potential kernel misspecification. The consistency results later also hold for Mat\'ern kernels. Assumption \ref{assump:positive} is needed to ensure the Box-Cox transformation is valid; this is always possible by adding an appropriately large constant on $f$, which does not affect its optimization. Assumption \ref{assump:unique} is a mild condition on the uniqueness of $\mathbf{x}^*$. Assumption \ref{assump:dom} on the ``first-order dominating'' condition (a term we coined) permits mild interactions in $h(\mathbf{x})$, as long as its minimizer corresponds to that of its marginal mean functions; note that this holds naturally when $h(\mathbf{x})$ is additive. 

With this in hand, we now state the desired consistency result for the GP-based BOMM:
\begin{theorem}\label{thm:gpbommconv}
Under Assumptions \ref{assump:design_points} and \ref{assump:rkhs} -- \ref{assump:dom}, the BOMM estimator $\hat{\mathbf{x}}^*_n$ \eqref{eq:bommgp} using the GP modeling framework \eqref{eq:taag} satisfies $f(\hat{\mathbf{x}}^*_n) \overset{P}{\rightarrow} f(\mathbf{x}^*)$.
\end{theorem}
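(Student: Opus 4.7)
The plan is to prove the theorem in four steps, propagating convergence from the GP posterior mean of the transformed response, through the marginal mean estimators, then to their minimizers, and finally to $f$ via continuity.

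First, I would establish uniform convergence of the posterior mean of the transformed response. Under Assumption \ref{assump:kernel}, there is no misspecification of $\lambda$ or the kernels, so the posterior mean $\mu_{n,\phi_\lambda^{-1} \circ f}(\cdot)$ in \eqref{eq:postmarg2} is precisely the kernel regressor for the transformed data $\{(\mathbf{x}_i, \phi_{\lambda}^{-1}(f(\mathbf{x}_i)))\}_{i=1}^n$ in the RKHS $\mathcal{H}_{\rm TAAG}$ corresponding to the kernel $k_A + k_Z$ (up to the mean $\mu$). By Assumption \ref{assump:rkhs}, the target $h = \phi_\lambda^{-1} \circ f$ lies in $\mathcal{H}_{\rm TAAG}$. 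By Assumption \ref{assump:design_points}, the design points are i.i.d. uniform on $\mathcal{X}$, so their fill distance $h_X = \sup_{\mathbf{x} \in \mathcal{X}} \min_i \|\mathbf{x} - \mathbf{x}_i\|$ vanishes in probability. Classical scattered-data approximation bounds for squared-exponential kernels (e.g., Wendland 2004) then yield
\begin{equation}
\bigl\| \mu_{n,\phi_\lambda^{-1} \circ f} - h \bigr\|_\infty \overset{P}{\to} 0.
\end{equation}
A minor wrinkle is that $\mathcal{H}_{\rm TAAG}$ is the sum space of the additive and non-additive RKHSs, but the standard sum-of-kernels RKHS theory gives decay rates on each component that combine to the above, and the finite-dimensional mean parameter $\mu$ can be handled as a nuisance via the same arguments that absorb an intercept into GP regression.

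Second, I would propagate this to the marginal mean estimators. Using Proposition \ref{prop:marg_mean_dist} and the fact that $m_l(x_l) = \int_{\mathcal{X}_{-l}} h(\mathbf{x})\, d\mathbf{x}_{-l}$,
\begin{equation}
\bigl| \mu_{n,l}(x_l) - m_l(x_l) \bigr| \;\le\; \textup{Vol}(\mathcal{X}_{-l}) \cdot \bigl\| \mu_{n,\phi_\lambda^{-1} \circ f} - h \bigr\|_\infty,
\end{equation}
uniformly in $x_l$, so $\|\mu_{n,l} - m_l\|_\infty \overset{P}{\to} 0$ for every $l = 1, \ldots, d$.

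Third, I would invoke the first-order dominating condition to identify the targets. Since $\phi_\lambda$ is strictly monotone increasing, $\mathbf{x}^* = \arg\min_{\mathbf{x}} f(\mathbf{x}) = \arg\min_{\mathbf{x}} h(\mathbf{x})$. By Assumption \ref{assump:dom}, $\mathbf{x}^*$ also minimizes $\sum_{l} \textup{Vol}(\mathcal{X}_{-l})\, m_l(x_l)$, which is separable; combined with uniqueness of $\mathbf{x}^*$ (Assumption \ref{assump:unique}), this forces each $m_l$ to admit the unique minimizer $x_l^*$. Since $m_l$ is continuous and $\mu_{n,l}$ converges to $m_l$ in sup-norm, a standard argmin-consistency argument (e.g., van der Vaart 1998, Theorem 5.7) gives $\hat{x}^*_{n,l} \overset{P}{\to} x_l^*$ for each $l$, hence $\hat{\mathbf{x}}^*_n \overset{P}{\to} \mathbf{x}^*$. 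Applying the continuous mapping theorem to $f = \phi_\lambda \circ h$, which is continuous since $\mathcal{H}_{\rm TAAG}$ consists of continuous functions and $\phi_\lambda$ is smooth, yields the desired $f(\hat{\mathbf{x}}_n^*) \overset{P}{\to} f(\mathbf{x}^*)$.

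The main obstacle is Step 1: justifying uniform convergence of the GP posterior mean to $h$ in sup-norm for the composite RKHS $\mathcal{H}_{\rm TAAG}$. Scattered-data approximation with random design points requires probabilistic control of the fill distance and careful bookkeeping of the RKHS norm of the interpolant; the sum-of-kernels structure, together with the empirical-Bayes estimate of $\mu$, adds technical overhead (though no new ideas) compared to the standard single-kernel result. Everything downstream (the integration bound and argmin consistency) is routine given the convergence in Step 1 and the first-order dominating condition.
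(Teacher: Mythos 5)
Your proposal is correct and follows essentially the same route as the paper's proof: sup-norm control of the GP posterior mean of $h=\phi_\lambda^{-1}\circ f$ via the RKHS interpolation bound and fill-distance decay for the smooth kernel $k_A+k_Z$ (the paper makes this quantitative as $\mathcal{O}(\exp(-C/d_n))$ using Kanagawa et al.\ and Wendland), integration to get uniform convergence of the marginal mean estimators, and then the first-order dominating condition plus uniqueness to obtain argmin consistency, finishing with continuity of $\phi_\lambda\circ h$. The only cosmetic difference is that the paper runs a sandwich argument on the values $m_l(\hat{x}_{n,l}^*)-m_l(x_l^*)$ before converting to argmin convergence via well-separation, whereas you invoke the standard argmin-consistency theorem directly; these are equivalent.
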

\noindent Its proof is provided in Appendix D. This theorem shows that, even when $f$ deviates mildly from generalized additivity (in that the first-order dominating condition \eqref{eq:fodom} still holds), the optimality gap for the GP-based BOMM converges to zero in probability as sample size $n$ increases, as desired. Here, the function space $\mathcal{F}_{\lambda}$ provides generalization on the GAM space considered earlier in Theorem \ref{THM:consistency_NP_BOMM}, which do not permit interaction effects in $h$.

We can further prove a similar consistency result for the GP-based BOMM+:
\begin{corollary}\label{corr:gpbomtmconv}
Under Assumptions \ref{assump:design_points} and \ref{assump:rkhs} -- \ref{assump:dom}, the BOMM+ estimator $\hat{\mathbf{x}}^*_{n}$ from Algorithm \ref{ALG2:TAG_AM} satisfies $f(\hat{\mathbf{x}}^*_{n}) \overset{P}{\rightarrow} f(\mathbf{x}^*)$.
\end{corollary}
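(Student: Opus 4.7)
The plan is to split the argument by which branch of Algorithm \ref{ALG2:TAG_AM} produces the output. Let $E_n$ be the event $\{\xi \le 1 - \rho\}$ triggered by the non-additivity diagnostic at sample size $n$. On $E_n$ the output $\hat{\mathbf{x}}_n^*$ coincides with the regular GP-based BOMM estimator from \eqref{eq:bommgp}, and on $E_n^c$ it coincides with the tail BOMM estimator $\hat{\mathbf{x}}_{n,\alpha^*}^*$ from \eqref{eq:bomtmgp}. Since $\{E_n, E_n^c\}$ partitions the sample space, it suffices to establish $f(\hat{\mathbf{x}}_n^{*,\textup{reg}}) \overset{P}{\to} f(\mathbf{x}^*)$ and $f(\hat{\mathbf{x}}_{n,\alpha^*}^*) \overset{P}{\to} f(\mathbf{x}^*)$ unconditionally, and then combine via the indicator decomposition. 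The first statement is immediate from Theorem \ref{thm:gpbommconv}, whose hypotheses coincide precisely with those invoked in the corollary.

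For the tail branch I would argue by a uniform perturbation argument. The tail marginal mean is
\begin{equation*}
\mu_{n,l}^{[\alpha]}(x_l) = \mu_{n,l}(x_l) - \sqrt{k_{n,l}(x_l,x_l)}\,\varphi(z_\alpha)/\alpha,
\end{equation*}
so the deviation from the regular posterior marginal mean is governed entirely by $\sqrt{k_{n,l}(x_l,x_l)}$. Via the integral representation in \eqref{eq:postmarg1}, $k_{n,l}(x_l,x_l)$ is bounded by $\textup{Vol}(\mathcal{X}_{-l})^2 \sup_{\mathbf{x}\in\mathcal{X}} k_{n,\phi_{\lambda}^{-1}\circ f}(\mathbf{x},\mathbf{x})$, which under Assumption \ref{assump:kernel} and the i.i.d.\ uniform design of Assumption \ref{assump:design_points} vanishes in probability through standard GP posterior-variance bounds in terms of the design fill distance. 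Provided the tail level $\alpha^*$ produced by the selection scheme in Appendix F stays bounded away from zero along the sequence, the scalar $\varphi(z_{\alpha^*})/\alpha^*$ remains bounded, and hence $\sup_{x_l} |\mu_{n,l}^{[\alpha^*]}(x_l) - \mu_{n,l}(x_l)| \overset{P}{\to} 0$.

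A standard argmin continuous-mapping argument, invoking Assumptions \ref{assump:unique} and \ref{assump:dom} to guarantee a unique well-separated minimizer for the limiting coordinatewise marginal mean, then yields $\hat{x}_{n,\alpha^*,l}^* - \hat{x}_{n,l}^{*,\textup{reg}} \overset{P}{\to} 0$ for each $l$. Combined with Theorem \ref{thm:gpbommconv} and continuity of $f$ (inherited from the kernel smoothness in Assumption \ref{assump:kernel}), this gives $f(\hat{\mathbf{x}}_{n,\alpha^*}^*) \overset{P}{\to} f(\mathbf{x}^*)$, completing the partition argument. The main obstacle I anticipate is really the subsidiary one of verifying that $\alpha^*$ stays bounded away from zero along the sequence; this is a structural requirement on the Appendix F selection rule rather than on $f$ itself, and I would dispatch it as an initial lemma. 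Everything else reduces to well-known argmin-convergence machinery combined with the consistency already established in Theorem \ref{thm:gpbommconv}.
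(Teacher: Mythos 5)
Your branch decomposition and the regular-branch part (direct appeal to Theorem \ref{thm:gpbommconv}) are fine, but the tail-branch argument has a genuine gap exactly where you flag it: you need the selected tail level $\alpha^*$ to stay bounded away from zero so that $\varphi(z_{\alpha^*})/\alpha^*$ stays bounded and $\sup_{x_l}|\mu_{n,l}^{[\alpha^*]}(x_l)-\mu_{n,l}(x_l)|\overset{P}{\to}0$. Nothing in the Appendix F rule delivers this: $\alpha^*$ is chosen as $\argmin_{\alpha\in(0,1]}\mu_{n,h}(\hat{\mathbf{x}}^*_{n,\alpha})$, and since $\varphi(z_\alpha)/\alpha\to\infty$ as $\alpha\to 0$, small $\alpha$ pushes the coordinatewise minimizers into high-posterior-variance regions; there is no structural reason the data-dependent minimizing $\alpha^*$ cannot be arbitrarily small along the sequence, and the corollary's assumptions give you nothing to rule this out. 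Deferring this to ``an initial lemma'' is not a dispatchable technicality --- it is the missing ingredient, and your subsequent argmin-continuity step (showing the tail estimator converges to the regular estimator as a \emph{point}) is both unavailable without it and stronger than what the conclusion requires.

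The paper's proof sidesteps this entirely by using the defining optimality of the Appendix F rule rather than trying to control $\alpha^*$: since $\alpha=1$ is feasible and recovers the regular BOMM point, the selected tail estimator satisfies $\mu_{n,h}(\hat{\mathbf{x}}^{\textup{TBOMM}})\le\mu_{n,h}(\hat{\mathbf{x}}^{\textup{BOMM}})$ by construction. One then writes
\begin{align*}
0 \le h\bigl(\hat{\mathbf{x}}^{\textup{TBOMM}}\bigr)-h(\mathbf{x}^*)
\le \bigl[h\bigl(\hat{\mathbf{x}}^{\textup{TBOMM}}\bigr)-\mu_{n,h}\bigl(\hat{\mathbf{x}}^{\textup{TBOMM}}\bigr)\bigr]
+\bigl[\mu_{n,h}\bigl(\hat{\mathbf{x}}^{\textup{BOMM}}\bigr)-h\bigl(\hat{\mathbf{x}}^{\textup{BOMM}}\bigr)\bigr]
+\bigl[h\bigl(\hat{\mathbf{x}}^{\textup{BOMM}}\bigr)-h(\mathbf{x}^*)\bigr],
\end{align*}
bounds the first two brackets uniformly by the RKHS interpolation error $\|h\|_{\mathcal{H}_{\textup{TAAG}}}\exp(-C/d_n)$ (with $d_n$ the fill distance, which vanishes in probability under Assumption \ref{assump:design_points}), and controls the last bracket by Theorem \ref{thm:gpbommconv}; continuity of $\phi_\lambda$ then transfers the conclusion to $f$. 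Note this argument never needs the tail estimator to be near the BOMM estimator, nor any control of $\alpha^*$ --- only that the predicted value at the tail point is no worse. To repair your proof, replace the uniform-perturbation plus argmin-continuity machinery for the tail branch with this sandwich based on the $\alpha^*$-selection inequality.
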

\noindent Its proof is provided in Appendix E. In practice, as seen later in experiments, BOMM+ can have considerable improvements over existing methods even when $h(\mathbf{x})$ has moderate interactions. However, showing this via an optimization rate (as in Theorem \ref{THM:consistency_NP_BOMM}) is difficult for the broader function space in Theorem \ref{thm:gpbommconv}, as we have found little work on such a space.

\section{Numerical Experiments}\label{sec:numerics}

We now inspect the performance of the proposed BOMM+ approach compared to existing one-shot black-box optimization methods. We first outline the experiment set-up, then investigate the compared methods for a suite of test functions and a custom function where the degree of interactions can be controlled. Finally, we investigate a batch-sequential implementation of BOMM+ and compare it with an existing batch-sequential black-box approach. Such a batch-sequential setting is not the primary focus of this work, but we include this to demonstrate the potential of BOMM+ in broader settings.

\subsection{Experiment set-up}\label{subsec:numeric_setting}

We first give an overview of the compared methods in the following experiments:
\begin{itemize}
\item \textit{Pick-the-Winner \textup{(PW)}}: This is the simple benchmark of selecting the evaluated design point that yields the lowest observed objective, i.e., $\hat{\mathbf{x}}^*_n = \underset{\mathbf{x} \in \{\mathbf{x}_1, \cdots, \mathbf{x}_n\}}{\text{argmin}} \; f(\mathbf{x})$.
\item \textit{Surrogate-based-optimization via the squared-exponential GP \textup{(SBO-SqExp)}}: This is a standard SBO benchmark, using a GP surrogate with an anisotropic squared-exponential kernel (commonly used in computer experiments; see \citealp{gramacy2020surrogates}). All model parameters are estimated via maximum likelihood using the \textsc{R} package \texttt{DiceKriging} \citep{R_DiceKriging}. Its estimator for $\mathbf{x}^*$ takes the form $\hat{\mathbf{x}}^*_n = \underset{\mathbf{x} \in \mathcal{X}}{\text{argmin}} \; \hat{f}_n(\mathbf{x})$, where $\hat{f}_n(\cdot)$ is the posterior mean of the GP given data $\mathbf{f}_n$.
\item \textit{Surrogate-based-optimization via the deep GP \textup{(SBO-DGP)}}: SBO-DGP uses the above SBO approach, except the surrogate $\hat{f}_n(\cdot)$ uses the deep GP from \cite{sauer2023active}, fitted with the \textsc{R} package \texttt{deepgp} \citep{R_deepgp} and its default settings.
\item \textit{Surrogate-based-optimization via TAAG \textup{(SBO-TAAG)}}: SBO-TAAG uses the above SBO approach, except the surrogate $\hat{f}_n(\cdot)$ uses the TAAG model \eqref{eq:taag}, fitted with the \textsc{R} package \texttt{TAG} \citep{R_TAG} and its default settings. Another SBO benchmark, SBO-TAG, uses the transformed additive GP surrogate (model \eqref{eq:taag} with $\eta = 0$) fitted with the same package. While these are not common benchmarks, we include them to contrast our approach, which uses the marginal means estimator from the TAAG surrogate, with the direct optimization of such a surrogate.
\item \textit{BOMM+}: This is the proposed approach in Algorithm \ref{ALG2:TAG_AM} with threshold $T = 0.4$ and significance level $\rho = 0.3$. 
\end{itemize}

\noindent All methods use the same points, sampled from a maximin Latin hypercube design \citep{morris1995exploratory} from the \textsc{R} package \texttt{lhs} \citep{R_lhs}. The sample size $n$ is set as $10d$, following \cite{loeppky2009choosing}. To quantify simulation variability, each experiment is replicated 20 times. The considered methods are compared on their optimality gap $f(\hat{\mathbf{x}}_n^*) - f(\mathbf{x}^*)$, i.e., the objective gap between its predicted minimizer and the true minimizer.

\subsection{A simulation bake-off}

\begin{figure}[!t]
    \centering
    \includegraphics[width=0.75\textwidth]{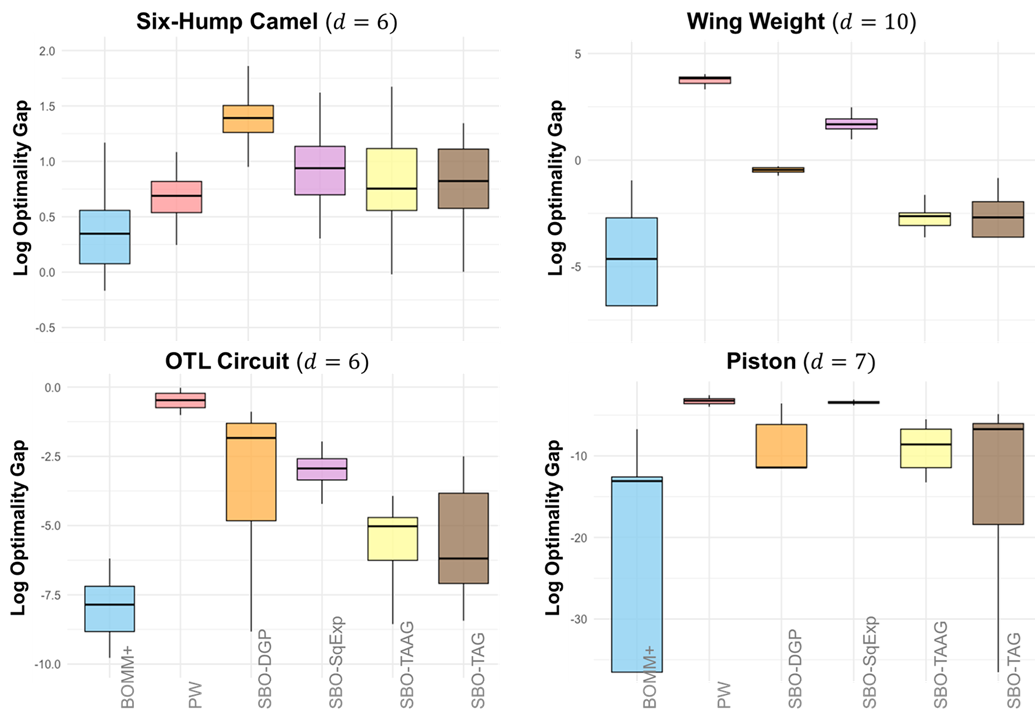}

    \caption{Log-optimality gaps of the compared methods for the six-hump camel, wing weight, OTL circuit and piston functions. Boxplots show experiment variability over 20 replications.}
    \label{Fig:SIMUL_VIRTUAL_LIB_EXP}
\end{figure}

With this set-up, we investigate a simulation ``bake-off'' of the compared methods in a suite of test functions in the computer experiments literature. In addition to the six-hump and wing weight functions from Section \ref{subsec:motiv_prob}, we consider two more test functions from \cite{surjanovic2013virtual}: the OTL circuit function in $d=6$ dimensions, and the piston function in $d=7$ dimensions; their specific forms are provided in Appendix G.

\begin{wrapfigure}{r}{0.35\textwidth}
\vspace{-0.6cm}
    \centering
    \includegraphics[width=\linewidth]{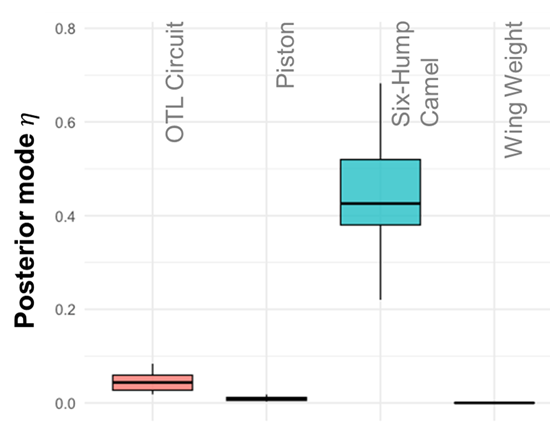}
    \caption{For BOMM+, the posterior mode of $[\eta|\hat{\boldsymbol{\Theta}}_{-\eta},\textup{data}]$ over 20 replications for the compared functions.}
    \label{Fig:diag}
    \vspace{-0.6cm}
\end{wrapfigure}

Figure \ref{Fig:SIMUL_VIRTUAL_LIB_EXP} shows the boxplots of the log-optimality gaps for each of the four test functions. There are several useful observations. First, the same limitations of existing methods noted in Section \ref{subsec:motiv_prob} arise here. In selecting $\hat{\mathbf{x}}_n^*$ amongst evaluated points, PW yields mediocre performance particularly as dimension $d$ increases. SBO approaches with the standard squared-exponential GP (SBO-SqExp) and deep GP (SBO-DGP) yield improvements over PW in some cases; in other cases, they may perform considerably worse. One reason is again its reliance on a good global surrogate fit on $\mathcal{X}$; when this is poor, such methods may perform worse than PW. Our BOMM+ approach performs quite well; it yields considerably smaller optimality gaps compared to other methods for all functions. Figure \ref{Fig:diag} shows boxplots of the estimated $\hat{\eta}$ (taken as the posterior mode of $[\eta|\hat{\boldsymbol{\Theta}}_{-\eta},\text{data}]$) for its underlying TAAG model. We see that the OTL, piston and wing weight functions have near-zero $\hat{\eta}$, suggesting (i) the presence of latent near-additive structure after transformation; the six-hump camel has considerably larger $\hat{\eta}$, suggesting (ii) the presence of latent interaction effects in $h$. For (i), BOMM+ employs the BOMM estimator \eqref{eq:bommgp} to exploit such near-additive structure via marginal mean functions. For (ii), BOMM+ employs the tail BOMM estimator \eqref{eq:bomtmgp}, which exploits local near-additivity via marginal tail means. In doing so, BOMM+ enjoys improved optimization performance over existing methods given limited runs in moderate-to-high dimensional domains.

\begin{figure}[!t]
  \centering
  \includegraphics[width=\textwidth]{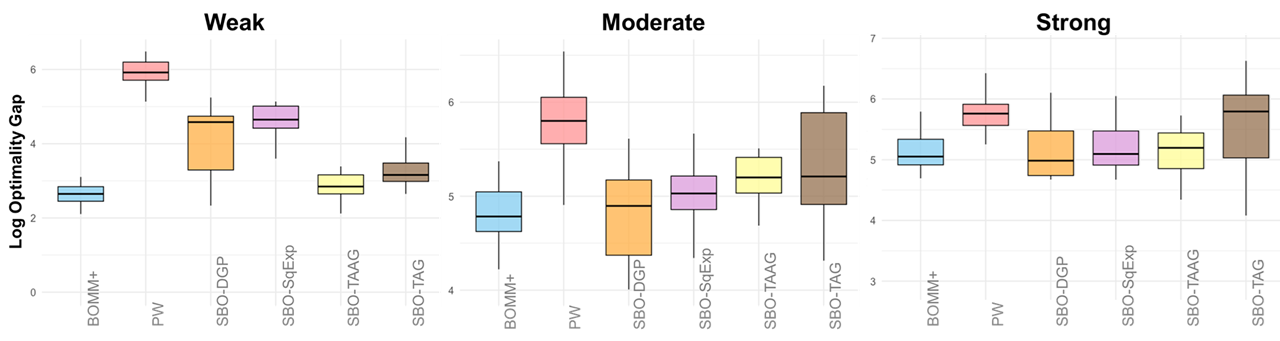}
  \caption{Log-optimality gaps for the weak ($\lambda=0.05$), moderate ($\lambda=0.3$) and strong ($\lambda=0.5$) interaction cases of the test function \eqref{eq:expfn}. Boxplots show experiment variability over 20 replications.}
  \label{Fig:SIMUL_NONADDITIVITY}
\end{figure}

The contrast between BOMM+ and the SBO approaches SBO-TAAG and SBO-TAG deserves further discussion. The latter approaches directly optimize various forms of the fitted TAAG model \eqref{eq:taag}, whereas BOMM+ makes use of the marginal mean functions from this fitted model. We see that, by \textit{modeling} for latent near-additive structure, SBO-TAAG and SBO-TAG offer some improvements over existing benchmarks. However, by further leveraging such latent near-additivity via a marginal means \textit{estimator} of $\mathbf{x}^*$, BOMM+ can further exploit this structure to yield considerably reduced optimization gaps. Given the challenges of limited samples in high-dimensional domains, this highlights the importance of fully exploiting marginal structure via BOMM+ for effective black-box optimization.

Next, we investigate the effectiveness of the diagnostic in Section \ref{subsec:atm} via the following $d=9$-dimensional custom test function, which is based on the exponential test function in \cite{dette2010generalized}. For brevity, $\epsilon$ and $\{m_l\}_{l=1}^9$ are specified in Appendix G.
\small
\begin{align}\label{eq:expfn}
&f(\textbf{x}) = 10\sum_{l=0}^{d/3-1} \sum_{m=1}^3 e^{-2/x_{3l+m}^{(m+1)/2}+\epsilon}
 + \lambda\sum_{l=1}^{d/3}\left\{(x_{3l-2}-m_{3l-2})-(x_{3l-1}-m_{3l-1})-(x_{3l}-m_{3l})\right\}^2,\nonumber\\
&(x_1, x_3, x_5) \in [0,5]^3, ~(x_2, x_8) \in [1,6]^2,~x_4 \in [1.5, 6.5],~ (x_6, x_7, x_9) \in [2,7]^3.
\end{align}
\normalsize
Here, the first term in \eqref{eq:expfn} is additive, and its second term controls the magnitude of interaction effects; the larger $\lambda > 0$ is, the greater such interactions. We inspect three functions with different interaction levels: $\lambda = 0.05$ (weak), $\lambda = 0.3$ (moderate) and $\lambda = 0.5$ (strong). The same methods are compared under the same settings, with 20 replications.

\begin{wrapfigure}{r}{0.4\textwidth}
\vspace{-0.5cm}
\small
    \begin{sc}
    \begin{tabular}{lc}
    \toprule
     & Percentage \\
    \midrule
    Weak ($\lambda=0.05$) & $0\%$ \\
    Moderate ($\lambda=0.3$) & $95\%$ \\
    Strong ($\lambda=0.5$) & $100\%$ \\
    \bottomrule
    \end{tabular}    
    \end{sc}
    \normalsize
    \captionof{table}{\label{table:diag2} Percentage of replications for which the BOMM+ diagnostic detects considerable non-additivity on $h$ for the test function \eqref{eq:expfn}. }
    \vspace{-0.5cm}
\end{wrapfigure}

Figure \ref{Fig:SIMUL_NONADDITIVITY} shows the boxplots of the log-optimality gaps for each function, and Table \ref{table:diag2} shows the percentage of replications for which considerable non-additivity is detected on $h$ via the diagnostic in Section \ref{subsec:atm}. For the weak interaction case, the diagnostic correctly identifies the lack of considerable non-additivity in all replications; BOMM+ then leverages the marginal means estimator \eqref{eq:bommgp} to exploit such structure, yielding improved optimization over benchmarks. For the moderate and strong cases, the diagnostic identifies considerable non-additivity in nearly all replications; BOMM+ then uses the marginal tail means estimator to exploit local additivity, yielding comparable or better performance to the best benchmarks. Here, SBO-DGP also performs well in the moderate and strong cases, with comparable optimality gaps to BOMM+. However, as noted before, such a method may suffer from a lack of robustness: when its surrogate fits poorly over $\mathcal{X}$, its optimization can be worse than PW (see Figure \ref{Fig:SIMUL_VIRTUAL_LIB_EXP}).

\subsection{Batch-sequential BOMM+} \label{subsec:seq}

\begin{wrapfigure}{r}{0.5\textwidth}
\vspace{-0.5cm}
\centering
\includegraphics[width=\linewidth]{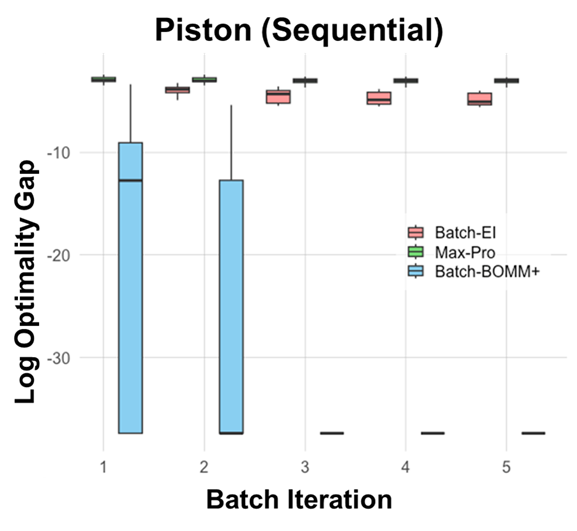}
\caption{Log-optimality gaps of the compared batch-sequential methods for the piston function as a function of batch iteration. Boxplots show experimental variability over 20 replications.}
\label{Fig:SIMUL_SEQ}
\vspace{-0.5cm}
\end{wrapfigure}

Suppose $f$ is evaluated at a set of initial design points $\mathbf{x}_1, \cdots, \mathbf{x}_n$. We wish to use this to adaptively select the next batch of points $\mathbf{x}_{n+1}, \cdots, \mathbf{x}_{n+b}$ for minimizing $f$, where $b > 1$ is the batch size. Consider the following simple approach. First, select one of the $b$ points as the inferred solution $\hat{\mathbf{x}}_n^*$ from BOMM+ using current evaluations of $f$ as data. Next, select the remaining $b-1$ points from a random Latin hypercube design (LHD; \citealp{mckay2000comparison}). The objective $f$ is then evaluated on this batch of design points, the TAAG model is re-fit, and the above batch procedure is repeated for $m \geq 1$ iterations (or until the run budget is exhausted). This can be intuited by the well-known exploration-exploitation trade-off \citep{kearns2002near}: the $b-1$ LHD points target the \textit{exploration} of $f$ to identify latent near-additive structure, and the evaluation at the BOMM+ estimate $\hat{\mathbf{x}}^*_n$ targets the \textit{exploitation} of this learned structure for optimization via marginal means.

As a proof-of-concept, we test this batch-sequential approach (which we call Batch-BOMM+) on the earlier $d=7$ piston function. Here, $n_{\rm ini}=35$ maximin LHD points are used initially, then batches of $b=5$ runs are taken until a total budget of $n=70$ evaluations is exhausted. We compare with two standard benchmarks. The first is a simple batch-sequential space-filling design approach using the maximum projection design in \cite{joseph2015maximum}, as implemented in the \textsc{R} package \texttt{MaxPro} \citep{R_MaxPro}. This can be viewed as a ``pure exploration'' strategy. The second is the batch expected improvement approach (Batch-EI; \citealp{chevalier2013fast}), which is widely used for batch-sequential Bayesian optimization. Here, Batch-EI uses a GP surrogate with anisotropic squared-exponential kernel, which is re-fit at each batch iteration. This simulation is replicated 20 times. Figure \ref{Fig:SIMUL_SEQ} shows the log-optimality gaps for the compared methods at each batch iteration. We see that Batch-BOMM+ yields consistent improvements over the two benchmarks as batch iteration increases. This shows that a simple adaptive optimization approach that leverages learned latent near-additive structure can be promising in a batch-sequential setting; we will investigate this as future work.

\section{Application: Neutrino detector optimization}\label{sec:application}

\begin{figure}[!t]
\centering
\includegraphics[width=\textwidth]{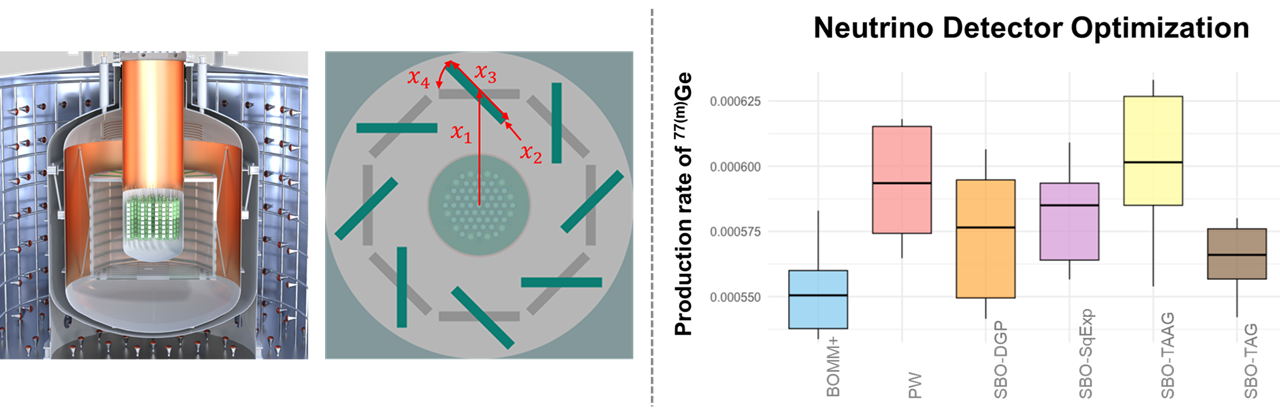}
\caption{[Left] The neutrino detector schematic for the LEGEND project, along with the considered neutron moderator geometry with $d=4$ inputs. [Right] Comparison of ${}^{77(m)}$\textup{Ge} production rates (smaller-the-better) for the selected moderator designs from each method. Boxplots show experiment variability over 10 replications.}
\label{Fig:Neutrino}
\end{figure}

The search for neutrinoless double-beta decay ($0\nu\beta\beta$) is a frontier in modern physics \citep{lrp}; if detected, this decay could provide an explanation for the matter-antimatter asymmetry \citep{canetti2012matter}, where there is a greater abundance of matter over antimatter in the Universe. The LEGEND (Large Enriched Germanium Experiment for Neutrinoless Double-Beta Decay) project \citep{legend1000pCDR} searches for $0\nu\beta\beta$ decay in a massive liquid argon cryostat in which 1000~kg of ${}^{76}$Ge-enriched germanium detectors are immersed (Figure \ref{Fig:Neutrino} left). A key experimental challenge is to minimize the cosmogenic neutron background generated by high-energy cosmic muons \citep{Pandola2007}. Such muons can enter the experiment and generate secondary neutrons, which interact with ${}^{76}$Ge to produce unwanted isotopes (e.g., ${}^{77(m)}$Ge) \citep{meierhofer2010}. The decays of such isotopes could mimic $0\nu\beta\beta$ events and thus obscure the desired physics signals \citep{wiesinger2018}.

To mitigate this background, one strategy is to employ a neutron moderator that slows down or absorbs the undesirable neutrons before they reach the inner, sensitive germanium detectors \citep{neuberger2021, schuetz2025resum}. Designing an effective moderator is challenging: it must suppress the flux of neutrons while remaining compatible with demanding engineering and material constraints. We investigate here a turbine-like moderator geometry (Figure \ref{Fig:Neutrino} middle), in which eight polyethylene panels are arranged radially around the germanium detector array to enhance the panels' directional shielding performance. This geometry is parametrized by $d=4$ inputs with corresponding ranges: the turbine radius $x_1$ (180-230 cm), the panel thickness $x_2$ (10-15 cm), the panel length $x_3$ (100-150 cm), and the panel tilt angle $x_4$ (0-20 degrees). The goal is to optimize the moderator design $\mathbf{x}$ within this geometry range for effective neutron shielding by minimizing $f(\mathbf{x})$, the production rate of the unwanted isotope ${}^{77(m)}$Ge.

A key challenge for this optimization is the simulation cost of a single moderator design. A high-fidelity simulation of this shielding process requires modeling individual primary muons and their interactions in the rock overburden and the shielding, which can require hundreds of CPU hours and is thus too expensive for method comparison. Instead, as a proof-of-concept, we use a lower-fidelity simulator \citep{warwick-ramachers,warwick-moritz} that injects secondary neutrons directly as primaries within the liquid argon cryostat, which focuses computational resources on the critical neutron transport within the active detector region. Each run of this lower-fidelity simulator requires 1 CPU hour, which facilitates method comparison. Here, the same methods as Section \ref{sec:numerics} are compared, with all methods using the same $n=50$ design points from a maximin Latin hypercube design. This experiment is replicated 10 times, and performance is gauged on the production rate of ${}^{77(m)}$Ge (smaller-the-better) for the selected moderator designs. 

Figure \ref{Fig:Neutrino} (right) shows the boxplots of ${}^{77(m)}$Ge production rates for the selected moderator designs from each method. As before, we see that PW yields mediocre performance, which is expected since the evaluated points are likely far from optimal. The SBO benchmarks give mixed results: some offer slightly lower ${}^{77(m)}$Ge rates to PW, whereas others yield slightly higher rates. BOMM+ again improves upon existing benchmarks, which highlights the importance of exploiting latent near-additive structure via marginal means for enhancing black-box optimization given limited experimental runs. It should be noted that, for neutrino detector design, optimization metrics at the upper tail percentiles are also of interest, as one wants to ensure good shielding performance with high confidence. From Figure \ref{Fig:Neutrino} (right), BOMM+ and SBO-TAG provide the best performance at the 90\% percentile (top whisker of boxplot). Our approach, however, has greater potential for identifying detector designs with improved shielding over SBO-TAG, as indicated by other percentiles in the boxplots.

\section{Conclusion}\label{sec:conclusion}

This paper introduces a new Black-box Optimization via Marginal Means (BOMM) method for effective one-shot optimization of an expensive black-box function $f$. Existing methods, e.g., pick-the-winner and surrogate-based optimization approaches, may yield mediocre performance with poor robustness, particularly as input dimensionality increases. To address this, BOMM leverages a new estimator of a global optimizer using marginal mean functions, which can be effectively estimated in high dimensions with limited runs. We prove that, when $f$ follows a generalized additive model and under mild conditions, the optimality gap from BOMM converges to zero and at a rate with considerably less dependence on dimensionality than existing methods. We then present a practical framework for implementing BOMM using the transformed approximate additive GP in \cite{lin2020transformation}, and prove its consistency for black-box optimization. Numerical experiments and an application to neutrino detector design demonstrate the improved black-box optimization performance of BOMM over existing methods with limited runs in moderate-to-high dimensions.

Given these promising results, there are several directions for further investigation. First, we will explore broader function spaces (e.g., extensions of the additive multi-index GP in \cite{li2023additive}) on which marginal structure can similarly be exploited for optimization. Next, given the promising results in Section \ref{subsec:seq}, we will develop an adaptive implementation of BOMM that sequentially exploits marginal structure, and investigate its theoretical properties. Finally, we will investigate a multi-fidelity extension of BOMM to fully tackle the neutrino detector design application using high-fidelity simulators.

\if0\blind{
}
\fi
\newpage
\appendix
\section{Proof of Theorem 1}
\label{pf:consistency_NP_BOMM}

As before, we suppose $\mathcal{X} = \prod_{l=1}^d [L_l,U_l]$. To prove Theorem 1 of the main paper, we first require the following lemmas. 

\begin{lemma}[Theorem 2.2 of \citep{horowitz2007rate}]\label{thm:L2_rate}
    Under Assumptions 1 -- 4 of the main paper, we have:
    \begin{align*}
        &\left\|\widehat \phi \circ \left(\widehat h_1 + \cdots + \widehat h_d\right) - \phi\circ \left(h_1 + \cdots +  h_d \right) \right\|_{L^2}^2 \\ &= \int_{\mathcal{X}} \left[\widehat \phi \circ \left(\widehat h_1(x_1) + \cdots + \widehat h_d(x_d)\right) - \phi\circ \left(h_1(x_1) + \cdots +  h_d(x_d) \right)\right]^2 d\mathbf{x} =  \mathcal{O}_p\left(n^{-2k/(2k+1)}\right).
    \end{align*}
\label{lem:horowitz}
\end{lemma}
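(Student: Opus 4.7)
Since this is stated as Theorem 2.2 of Horowitz (2007), my plan is to follow the standard M-estimation / sieve estimation template for a nonparametric GAM with unknown link, adapted to the penalization scheme in (\ref{NONPARAM_EST_HOROWITZ})--(\ref{NONPARAM_EST_HOROWITZ2}). Write $H(\mathbf{x}) = h_1(x_1) + \cdots + h_d(x_d)$ and $\widehat H(\mathbf{x}) = \widehat h_1(x_1) + \cdots + \widehat h_d(x_d)$, and let $F = \phi \circ H$, $\widehat F = \widehat\phi \circ \widehat H$. The target object is $\|\widehat F - F\|_{L^2(\mathcal{X})}^2$, and the natural strategy is a three-step argument: (i) establish a consistency result $\|\widehat F - F\|_{L^2} = o_P(1)$ together with tightness of the penalty functionals evaluated at $(\widehat\phi, \widehat h_1, \ldots, \widehat h_d)$; (ii) linearize around the truth using smoothness of $\phi$; (iii) invoke an empirical process inequality on a suitable Sobolev-type class to get the minimax one-dimensional rate $n^{-2k/(2k+1)}$, which enters because, after using the link, the problem reduces to estimating $d$ univariate $k$-smooth functions plus one univariate link.

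First I would exploit the basic inequality: because $(\widehat\phi, \widehat h_1, \ldots, \widehat h_d)$ minimizes the penalized empirical criterion (\ref{NONPARAM_EST_HOROWITZ}) subject to (\ref{NONPARAM_EST_HOROWITZ2}), plugging in the truth yields
\begin{equation*}
\mathbb{P}_n(F - \widehat F)^2 + \lambda_n^2 J(\widehat\phi) \;\le\; \lambda_n^2 J(\phi),
\end{equation*}
where $J(\cdot)$ denotes the Sobolev-type penalty on $\phi$ appearing in (\ref{NONPARAM_EST_HOROWITZ}) and $\mathbb{P}_n$ is the empirical measure on the i.i.d.\ uniform design points (Assumption 3). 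Since there is no additive noise (the simulator is deterministic), the right-hand side is deterministically $O(\lambda_n^2)$ under Assumption 1. This already gives control of $\mathbb{P}_n(F - \widehat F)^2$ at rate $\lambda_n^2 = O(n^{-2k/(2k+1)})$. The bulk of the work is converting this empirical $L^2$ bound to a population $L^2$ bound, which is where uniform entropy / chaining enters.

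Next I would introduce the localized function class
\begin{equation*}
\mathcal{G}_\delta = \bigl\{ \phi \circ H - \tilde\phi \circ \tilde H : J(\tilde\phi) \le C, \; \tilde H \in \mathcal{A}_C, \; \|\phi\circ H - \tilde\phi\circ\tilde H\|_{L^2} \le \delta \bigr\},
\end{equation*}
where $\mathcal{A}_C$ is the additive class satisfying the norming constraint in (\ref{NONPARAM_EST_HOROWITZ2}). The composite class has bracketing entropy controlled by that of a one-dimensional $k$-Sobolev ball (because $\phi$ is univariate and each $h_l$ is univariate, and $\phi$ is Lipschitz on the relevant range by Assumptions 1--2), so its entropy integral behaves like $\delta^{1-1/(2k)}$. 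Then a standard peeling / modulus-of-continuity argument (e.g., van de Geer style) converts $\mathbb{P}_n g^2 = O_P(\lambda_n^2)$ into $\mathbb{P} g^2 = O_P(\lambda_n^2)$ for $g \in \mathcal{G}_\delta$, which is exactly the statement $\|\widehat F - F\|_{L^2}^2 = O_P(n^{-2k/(2k+1)})$. To make this work rigorously I would also need the preliminary step of showing $J(\widehat\phi) = O_P(1)$ and that $\widehat H$ lives in a bounded Sobolev ball, both of which follow from the penalty / norming constraints combined with the finite-penalty bound above.

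The main obstacle is identifiability and the nonlinearity of the composition $\phi \circ H$: two different pairs $(\phi, H)$ and $(\tilde\phi, \tilde H)$ can yield nearly equal composite functions, so one cannot directly recover rates on $\widehat\phi - \phi$ and $\widehat h_l - h_l$ separately without the strict monotonicity constraint $\phi'(z) > 0$ and the norming constraint in (\ref{NONPARAM_EST_HOROWITZ2}). Fortunately, the lemma only asserts a rate for the composite $\widehat F - F$, which sidesteps this, and the strict monotonicity plus Sobolev-penalty bound prevents pathological behavior on the argument of $\widehat\phi$. A secondary difficulty is that the constrained optimization sits on the boundary of the monotonicity cone $\{\phi': \phi' > 0\}$; here I would appeal to the interior nature of the truth ($\phi$ is strictly monotone by Assumption 2) so that $\widehat\phi'$ is bounded away from zero on compact subsets with probability tending to one, allowing all the above arguments to be carried out on a well-behaved shrinking neighborhood of the truth. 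Combining these gives the stated $O_P(n^{-2k/(2k+1)})$ rate.
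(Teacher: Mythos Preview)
The paper does not prove this lemma: it is stated as Theorem 2.2 of \cite{horowitz2007rate} and invoked purely as a black box in the proof of Lemma 2. There is therefore no in-paper argument to compare your proposal against.

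For what it is worth, your sketch follows the standard penalized M-estimation route that Horowitz uses (basic inequality, entropy control of the composite class via the one-dimensional Sobolev structure of both the link and the additive components, then empirical-process peeling to transfer the empirical $L^2$ bound to the population one). One remark: in the present paper the simulator is noiseless, so your basic inequality $\mathbb{P}_n(F-\widehat F)^2 \le \lambda_n^2 J(\phi)$ is exact with no stochastic cross-term, and the only randomness enters through the random design; Horowitz's original argument is written for noisy regression and carries an additional stochastic piece that you have (correctly, for this setting) omitted. Your identification of the empirical-to-population transfer as the main step is right, and the entropy claim is correct in spirit, though the composite class $\{\tilde\phi\circ\tilde H\}$ has metric entropy exponent $1/k$ coming from $d+1$ separate univariate Sobolev balls rather than literally a single one; this affects only constants, not the rate.
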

\noindent We can generalize this lemma to establish the following uniform convergence result:
\begin{lemma}\label{thm:L_inf_rate}
    Under Assumptions 1 -- 4 of the main paper, we have:
    \begin{align*}
     &\left\|\widehat \phi \circ \left(\widehat h_1 + \cdots + \widehat h_d\right) - \phi\circ \left(h_1 + \cdots +  h_d \right) \right\|_{L^\infty} =  \mathcal{O}_p\left(n^{-k/(4k+2)} \right).
    \end{align*}
\end{lemma}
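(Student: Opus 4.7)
The target rate $n^{-k/(4k+2)}$ is exactly the square root of the $L^2$ rate $n^{-k/(2k+1)}$ of Lemma~\ref{thm:L2_rate}, and this is precisely the gain delivered by the one-dimensional interpolation inequality $\|g\|_\infty \lesssim \|g\|_{L^2}^{1/2}\|g'\|_{L^2}^{1/2} + \|g\|_{L^2}$ valid for $g \in H^1$ on a bounded interval. A direct $d$-dimensional interpolation applied to $\widehat F - F$ would instead yield an exponent with an unavoidable dependence on $d$, so the plan is to exploit the additive structure of the GAM and reduce the $d$-dimensional composition error to a collection of $d+1$ one-dimensional component errors (one for $\widehat\phi - \phi$ and one for each $\widehat h_l - h_l$), and then apply the above 1D interpolation to each.

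First I would apply the mean value theorem. Writing $\widehat H = \sum_l \widehat h_l$ and $H = \sum_l h_l$, one has the pointwise identity $\widehat F(\mathbf{x}) - F(\mathbf{x}) = [\widehat\phi - \phi](\widehat H(\mathbf{x})) + \phi'(\xi_{\mathbf{x}})[\widehat H(\mathbf{x}) - H(\mathbf{x})]$ for some intermediate $\xi_{\mathbf{x}}$. Taking $\sup_{\mathbf{x}}$, using $\|\phi'\|_\infty < \infty$ from Assumption~\ref{assump:derivs_phi_h}, and bounding $\widehat H - H = \sum_l (\widehat h_l - h_l)$ by the triangle inequality, I obtain $\|\widehat F - F\|_\infty \leq \|\widehat\phi - \phi\|_{L^\infty(I_n)} + \|\phi'\|_\infty \sum_l \|\widehat h_l - h_l\|_\infty$, where $I_n$ is the range of $\widehat H$. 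Under the constraints in \eqref{NONPARAM_EST_HOROWITZ2} together with Assumption~\ref{assump:derivs_phi_h}, this interval $I_n$ remains inside a fixed compact set uniformly in $n$ with probability tending to one.

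Next I would establish, for each of the $d+1$ one-dimensional error functions, both an $L^2$ rate of $\mathcal{O}_P(n^{-k/(2k+1)})$ and an $H^1$-norm that is $\mathcal{O}_P(1)$. For the $L^2$ rate, component-wise rates follow by transferring the composite $L^2$ rate of Lemma~\ref{thm:L2_rate} through the identification imposed by the constraints \eqref{NONPARAM_EST_HOROWITZ2} and the strict monotonicity of $\phi$ (Assumption~\ref{assump:transformation}), which makes $\phi'$ uniformly bounded below on the range of $H$; this component-wise rate also appears as an intermediate step in the proof of Theorem~2.2 in \citep{horowitz2007rate}. For the $H^1$-boundedness, the $H^1$ norms of the true components $\phi, h_1, \ldots, h_d$ are bounded by Assumption~\ref{assump:derivs_phi_h}, while those of the estimators $\widehat\phi, \widehat h_1, \ldots, \widehat h_d$ are controlled by the constraint $\sum_l \int (h_l')^2 \leq 1$ embedded in \eqref{NONPARAM_EST_HOROWITZ2} and by the penalty in \eqref{NONPARAM_EST_HOROWITZ} combined with Assumption~\ref{assump:reg_parameter} on $\lambda_n$.

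Finally, applying the 1D Gagliardo--Nirenberg inequality to each of the $d+1$ components converts its $L^2$ rate $\mathcal{O}_P(n^{-k/(2k+1)})$ into the desired $L^\infty$ rate $\mathcal{O}_P(n^{-k/(4k+2)})$, and summing the $d+1$ contributions (with a $d$-dependent constant) completes the proof. The main obstacle is extracting these component-wise $L^2$ rates from the composite rate of Lemma~\ref{thm:L2_rate}; although the additive decomposition is natural, pinning down the individual functions requires careful use of the normalization and monotonicity constraints to avoid non-identifiability between $\widehat\phi$ and the sum $\sum_l \widehat h_l$. All remaining ingredients (the MVT decomposition, the $H^1$-boundedness of the estimators, and the 1D interpolation) are standard.
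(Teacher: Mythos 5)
Your route is genuinely different from the paper's, and it contains a gap at exactly the step you flag as "the main obstacle." The paper never decomposes the error into components: it works directly with the composite error $\widehat\psi-\psi$, where $\psi=\phi\circ(h_1+\cdots+h_d)$, bounds $\|\widehat\psi-\psi\|_{L^\infty}$ by its $W^{1,2}(\mathcal{X})$ norm via a Sobolev embedding, and then controls the first-derivative term by a Gagliardo--Nirenberg interpolation on the $d$-dimensional domain, using that the second derivatives of $\widehat\psi-\psi$ are continuous on the compact $\mathcal{X}$ (this is where $k\ge 2$ in Assumption~\ref{assump:derivs_phi_h} enters) and are therefore bounded in $L^2$. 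The square-root gain $n^{-k/(2k+1)}\mapsto n^{-k/(4k+2)}$ is thus harvested at the level of the composite function, so no component-wise rates are ever needed, and the only quantitative input is Lemma~\ref{thm:L2_rate}.

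The gap in your argument is that the component-wise $L^2$ rates $\|\widehat\phi-\phi\|_{L^2}=\mathcal{O}_p(n^{-k/(2k+1)})$ and $\|\widehat h_l-h_l\|_{L^2}=\mathcal{O}_p(n^{-k/(2k+1)})$ are asserted, not established, and they do not follow from Lemma~\ref{thm:L2_rate}, which controls only the composite. The composite is invariant under $\widehat\phi\mapsto\widehat\phi\circ A$, $\widehat h_1+\cdots+\widehat h_d\mapsto A^{-1}\circ(\widehat h_1+\cdots+\widehat h_d)$ for any increasing $A$, so passing from a small composite error to small component errors requires a quantitative identifiability/stability argument based on the normalization and monotonicity constraints in \eqref{NONPARAM_EST_HOROWITZ2}; that $\phi'$ is bounded below on the range of $h$ does not by itself separate the two error sources, and it is not obvious that the components inherit the \emph{same} rate. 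You defer this to an alleged intermediate step in the proof of Theorem~2.2 of \cite{horowitz2007rate}, but that theorem (as used in Lemma~\ref{thm:L2_rate}) is a statement about the composite, so as written your proof rests on an unverified citation for its central step. Your remaining ingredients (the mean-value decomposition with the range of $\widehat h_1+\cdots+\widehat h_d$ confined to a fixed compact set, the $H^1$ bounds from the constraint and the penalty together with Assumption~\ref{assump:reg_parameter}, and the one-dimensional interpolation inequality) are fine, and if the component rates were supplied your argument would go through; but the paper's proof shows they are unnecessary, since interpolating the composite error directly already yields \eqref{eq:bommrate}-type gains without confronting the identification problem.
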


\begin{proof}[Proof (Lemma 2)]
Let us define:
\begin{align*}
    \psi &:= \phi\circ \left(h_1 + \cdots +  h_d \right) \\
    \widehat\psi &: = \widehat \phi \circ \left(\widehat h_1 + \cdots + \widehat h_d\right).
\end{align*}
From Assumption 1, it follows that $\widehat \psi - \psi \in W^{1,2}(\mathcal{X})$. From the Sobolev Embedding Theorem (Theorem 12.71 in \citep{hunter2001applied}), we get that:
    \begin{align*}
        \left\|\widehat \psi - \psi \right\|^2_{L^\infty} &\le C \left \|\widehat \psi - \psi \right \|^2_{W^{1,2}(\mathcal{X})} \\ &= C\left(\left \|\widehat \psi - \psi \right \|_{L^2}^2 + \sum_{|\alpha| = 1}\left \|D^\alpha\widehat \psi - D^\alpha \psi \right \|_{L^2}^2 \right)\\
        &= \mathcal{O}\left(\left\|\widehat \psi - \psi\right\|^{2}_{L^2} \right) + \mathcal{O}\left(\sum_{|\alpha| = 1} \left\|D^\alpha\widehat \psi - D^\alpha \psi \right \|_{L^2}^2\right),
    \end{align*}
    for some constant $C > 0$. With $|\alpha|$ = 1 and $|\tilde\alpha| = 2$, from the Gagliardo-Nirenberg inequality (Theorem 1 in \citep{nirenberg1966extended}), we have:
    \begin{align*}
           \left\|D^\alpha\widehat \psi - D^\alpha \psi \right\|_{L^2} &\leq C_1 \left\|D^{\tilde\alpha} \widehat \psi - D^{\tilde\alpha} \psi \right\|_{L^2}^{\frac{1}{2}} \left\|\widehat \psi - \psi \right\|_{L^2}^{\frac{1}{2}} + C_2 \left\|\widehat \psi - \psi\right\|_{L^2} \\
           &= \mathcal{O}\left( \left\|\widehat \psi - \psi \right\|_{L^2}^{\frac{1}{2}} + \left\|\widehat \psi - \psi \right\|_{L^2}\right) \\ 
           &= \mathcal{O}_p\left( n^{-k/(4k+2)}\right),
    \end{align*}
    for some constants $C_1 > 0$ and $C_2 > 0$. The first equality follows from the fact that $D^{\tilde\alpha} \widehat \psi - D^{\tilde\alpha} \psi$ is continuous on a bounded domain $\mathcal{X}$, and the second equality is a consequence of Lemma \ref{lem:horowitz}. Therefore, we have:
    \begin{align*}
        \left\|\widehat \psi - \psi \right\|^2_{L^\infty}  &=  \mathcal{O}\left(\left\|\widehat \psi - \psi\right\|^{2}_{L^2} \right) +  \mathcal{O}\left(\sum_{|\alpha| = 1} \left\|D^\alpha\widehat \psi - D^\alpha \psi \right \|_{L^2}^2\right) \\
        &=  \mathcal{O}_p\left(n^{-2k/(2k+1)} \right) +  \mathcal{O}_p\left(n^{-k/(2k+1)}\right) \\
        &= \mathcal{O}_p\left(n^{-k/(2k+1)}\right),
    \end{align*}
    which yields the statement.
\end{proof}

With this, we can now prove Theorem 1 of the main paper.
\begin{proof}[Proof (Theorem 1)]
Recall that
$f(\mathbf{x}) = \phi(h_1(x_1) + \cdots + h_d(x_d))$. Define:
\begin{equation*}
\widehat f(\mathbf{x}) := \widehat\phi\left(\widehat h_1(x_1) + \cdots + \widehat h_d(x_d)\right),
\end{equation*}
where $\widehat \phi$, $\widehat h_1, \cdots \widehat h_d$ are solutions to the constrained least squares problem in Equations (7)--(8) of the main paper. Note that:
\begin{align*}
0 &\le f\left(\widehat{\mathbf{x}}_n^{*}\right) - f(\mathbf{x}^*) \\
&= f\left(\widehat{\mathbf{x}}_n^{*}\right) - \widehat f\left(\widehat{\mathbf{x}}_n^{*}\right) + \widehat f\left(\widehat{\mathbf{x}}_n^{*}\right) - \widehat f\left(\mathbf{x}^*\right) + \widehat f\left(\mathbf{x}^*\right)  - f(\mathbf{x}^*) \\
&\le f\left(\widehat{\mathbf{x}}_n^{*}\right) - \widehat f\left(\widehat{\mathbf{x}}_n^{*}\right) + \widehat f\left(\mathbf{x}^*\right)  - f(\mathbf{x}^*),
\end{align*}    
where the last inequality follows from the fact $\widehat f\left(\widehat{\mathbf{x}}_n^{*}\right) - \widehat f\left(\mathbf{x}^*\right) \le 0$. From Lemma 2, we have:
\begin{align*}
    f\left(\widehat{\mathbf{x}}_n^{*}\right) - \widehat f\left(\widehat{\mathbf{x}}_n^{*}\right) &= \mathcal{O}_p\left(n^{-k/(4k+2)} \right),\\
    \widehat f\left(\mathbf{x}^*\right)  - f(\mathbf{x}^*) &=  \mathcal{O}_p\left(n^{-k/(4k+2)} \right),
\end{align*}
which proves the statement.

\end{proof}

\section{Proof of Proposition 2}
\label{pf:marg_mean_dist}
\begin{proof}
    Since the linear functional of the Gaussian process remains a Gaussian process \citep{bogachev1998gaussian}, it is enough to show the posterior mean and covariance functions for $m_l(x_l) = \int h(\mathbf{x})d\mathbf{x}_{-l}$. Note that the posterior mean and covariance functions for $h(\mathbf{x}) = \phi_{\lambda}^{-1}\circ f(\mathbf{x})$, given in Equation (12) of the main paper, follow directly from the GP predictive equations (Equation (2) of main paper); we denote these as $\mu_{n,h}(\mathbf{x}) = \int_{\Omega} h(\mathbf{x};\omega) \mathbb{P}(d\omega)$ and $k_{n,h}(\mathbf{x},\mathbf{x}')$, where $\mathbb{P}$ denotes the posterior measure on $h$ given data. For the posterior mean function of $m_l$, using Fubini's theorem, we have:
    \begin{align*}
         \mathbb{E}_{\mathbb{P}}\left[\int_{\prod_{j \neq l} [L_j,U_j]} h(\mathbf{x})d\mathbf{x}_{-l}\right ]& = \int_\Omega \int_{\prod_{j \neq l} [L_j,U_j]} h(\mathbf{x}; \omega)d\mathbf{x}_{-l} \mathbb{P}(d\omega) \\
         &= \int_{\prod_{j \neq l} [L_j,U_j]}  \int_\Omega h(\mathbf{x}; \omega) \mathbb{P}(d\omega)d\mathbf{x}_{-l}\\
         &= \int_{\prod_{j \neq l} [L_j,U_j]}\mu_{n,h}(\mathbf{x}) d\mathbf{x}_{-l}.
    \end{align*}
    For its posterior covariance function, note that:
    \begin{align*}
         &\text{Cov}_{\mathbb{P}}\left[\int_{\prod_{j \neq l} [L_j,U_j]} h(\mathbf{x})d\mathbf{x}_{-l}, ~\int_{\prod_{j \neq l} [L_j,U_j]} h(\mathbf{x}')d\mathbf{x}'_{-l}\right ] \\
         &= \int_\Omega \left(\int_{\prod_{j \neq l} [L_j,U_j]} h(\mathbf{x}; \omega)-\mu_{n,h}(\mathbf{x})d\mathbf{x}_{-l} \right)\left(\int_{\prod_{j \neq l} [L_j,U_j]} h(\mathbf{x}'; \omega)-\mu_{n,h}(\mathbf{x}')d\mathbf{x}'_{-l} \right)\mathbb{P}(d\omega) \\
         &= \int_\Omega \int_{\prod_{j \neq l} [L_j,U_j]}\int_{\prod_{j \neq l} [L_j,U_j]} (h(\mathbf{x}'; \omega)-\mu_{n,h}(\mathbf{x}'))(h(\mathbf{x}; \omega)-\mu_{n,h}(\mathbf{x}))d\mathbf{x}_{-l} d\mathbf{x}'_{-l} \mathbb{P}(d\omega) \\
         &= \int_{\prod_{j \neq l} [L_j,U_j]}\int_{\prod_{j \neq l} [L_j,U_j]}\int_\Omega  (h(\mathbf{x}'; \omega)-\mu_{n,h}(\mathbf{x}'))(h(\mathbf{x}; \omega)-\mu_{n,h}(\mathbf{x}))\mathbb{P}(d\omega)d\mathbf{x}_{-l} d\mathbf{x}'_{-l} \\
         &= \int_{\prod_{j \neq l} [L_j,U_j]}\int_{\prod_{j \neq l} [L_j,U_j]}k_{n,h}(\mathbf{x}, \mathbf{x}')d\mathbf{x}_{-l} d\mathbf{x}'_{-l},
    \end{align*}
    which proves the statement.
\end{proof}

\section{Proof of Proposition 3}
\label{pf:marg_mean_sqexp}
\noindent Notice that:
\begin{align*}
&\int_{\prod_{j \neq l} [L_j,U_j]} \mu_{n,\phi_\lambda^{-1} \circ f}(\mathbf{x}) d\mathbf{x}_{-l} \\&= \int _{\prod_{j \neq l} [L_j,U_j]}\mu + \left( (1-\eta){\mathbf{r}_{n,A}}(\mathbf{x}) + \eta \mathbf{r}_{n,Z}(\mathbf{x}) \right)^\top \left((1-\eta)\mathbf{R}_{n,A} + \eta \mathbf{R}_{n,Z} \right)^{-1} \left(\phi_{\lambda}^{-1}(\mathbf{f}_n)-\mu \mathbf{1}\right) d\mathbf{x}_{-l}\\
&= \mu \prod_{j \neq l} (U_j-L_j)+\int _{\prod_{j \neq l} [L_j,U_j]} \sum_{i = 1}^n q_i \left[ (1-\eta){\mathbf{r}_{n,A}}(\mathbf{x}) + \eta \mathbf{r}_{n,Z}(\mathbf{x})\right]_i d\mathbf{x}_{-l},
\end{align*}
where $q_i$ is the $i^{\text{th}}$ coordinate of 
$\left((1-\eta)\mathbf{R}_{n,A} + \eta \mathbf{R}_{n,Z} \right)^{-1} \left(\phi_{\lambda}^{-1}(\mathbf{f}_n)-\mu \mathbf{1}\right)$,
and:
\begin{align*}
\left[ (1-\eta){\mathbf{r}_{n,A}}(\mathbf{x}) + \eta \mathbf{r}_{n,Z}(\mathbf{x})\right]_i = (1-\eta)r_A(\mathbf{x}_i-\mathbf{x}) + \eta r_Z(\mathbf{x}_i-\mathbf{x}).
\end{align*}
\noindent Also note that:
\begin{align*}
    \int _{\prod_{j \neq l} [L_j,U_j]} \sum_{i = 1}^n q_i \left[ (1-\eta){\mathbf{r}_{n,A}}(\mathbf{x}) + \eta \mathbf{r}_{n,Z}(\mathbf{x})\right]_i d\mathbf{x}_{-l} 
    &= (1-\eta) \sum_{i = 1}^n q_i  \int _{\prod_{j \neq l} [L_j,U_j]}  r_A(\mathbf{x}_i-\mathbf{x})d\mathbf{x}_{-l} \\ &\quad + \eta \sum_{i = 1}^n q_i  \int _{\prod_{j \neq l} [L_j,U_j]} r_Z(\mathbf{x}_i-\mathbf{x}) d\mathbf{x}_{-l}.
\end{align*}
In fact, we can derive explicit formulae to compute these integrals by exploiting the structure of $r_A$ and $r_Z$. We first focus on the expression that involves $r_A$. Using the additive structure of $r_A$ with $\sum_{k=1}^n w_k = 1,  w_k \ge 0$ and $\mathbf{x}_i = (x_{i,1}, \cdots, x_{i,d})^\top$, we have:
\begin{align*}
    \int_{\prod_{j \neq l} [L_j,U_j]}  r_A(\mathbf{x}-\mathbf{x}_i)d\mathbf{x}_{-l} &= \sum_{k=1}^d   w_k \int _{\prod_{j \neq l} [L_j,U_j]}  \exp(-(x_k-x_{i,k})^2/\theta_{A,k}^2)d\mathbf{x}_{-l},
\end{align*}
and it can be shown that:
\begin{align*}
&\int _{\prod_{j \neq l} [L_j,U_j]}  \exp(-(x_k-x_{i,k})^2/\theta_{A,k}^2)d\mathbf{x}_{-l} \\
&= \begin{cases}
         \exp(-(x_j-x_{i,l})^2/\theta_{A,l}^2) \cdot \prod_{j \neq l} (U_j-L_j)   \quad \text{for}~~ k = l,\\
       \int_{[L_k,U_k]} \exp\left(-(x_k-x_{i,k})^2/\theta_{A,k}^2\right)dx_{k} \cdot \prod_{j \neq l, k} (U_j-L_j)  \quad \text{for}~~ k \neq l.
    \end{cases}
\end{align*}
In particular,  we have:
$$
\int_{[L_k,U_k]} \exp(-(x_k-x_{i,k})^2/\theta_{A,k}^2)dx_{k} 
 =  \sqrt{\pi}\theta_{A,k} \left(\Phi_{i,k}(U_k) - \Phi_{i,k}(L_k) \right),
$$
where $\Phi_{i,k}$ is the cumulative distribution corresponding to $N\left(x_{i,k}, \theta_{A,k}^2/2\right)$. Therefore, we have:
\begin{align*}
\int _{\prod_{j \neq l} [L_j,U_j]}  r_A(\mathbf{x}-\mathbf{x}_i)d\mathbf{x}_{-l} &= w_l \exp(-(x_l-x_{i,l})^2/\theta_{A,l}^2) \prod_{j \neq l} (U_j-L_j) \\ &+ \sqrt{\pi}\sum_{k\neq l} w_k  \theta_{A,k}  \left(\Phi_{i,k}(U_k) - \Phi_{i,k}(L_k) \right)  \prod_{ j \neq l, k} (U_j-L_j) ,
\end{align*}
which leads to:
$$
(1-\eta) \sum_{i = 1}^n q_i \int _{\prod_{j \neq l} [L_j,U_j]}  r_A(\mathbf{x}-\mathbf{x}_i)d\mathbf{x}_{-l} = (1- \eta)  w_l  \prod_{j \neq l} (U_j-L_j) \cdot \sum_{i=1}^n q_i\exp(-(x_l-x_{i,l})^2/\theta_{A,l}^2) + C_l,
$$
for some constant $C_l> 0$. For the second term, note that:
\begin{align*}
   \int _{\prod_{j \neq l} [L_j,U_j]} r_Z(\mathbf{x}-\mathbf{x}_i) d\mathbf{x}_{-l} &=  \int _{\prod_{j \neq l} [L_j,U_j]} \prod_{j=1}^d \exp\left(-(x_j-x_{i,j})^2/\theta_{Z,j}^2\right) d\mathbf{x}_{-l} \\
   &= \exp(-(x_l-x_{i,l})^2/\theta_{Z,l}^2) \cdot \prod_{j \neq l} \int_{[L_j,U_j]} \exp(-(x_j-x_{i,j})^2/\theta_{Z,j}^2)dx_j \\
   &= \exp(-(x_l-x_{i,l})^2/\theta_{Z,l}^2) \cdot \pi^{\frac{d-1}{2}}\prod_{j \neq l} \theta_{Z,j} \left(\tilde\Phi_{i,j}(U_j) - \tilde\Phi_{i,j}(L_j) \right),
\end{align*}
where $\tilde\Phi_{i,j}$ is the cumulative distribution corresponding to $N(x_{i,j}, \theta_{Z,j}^2/2)$.
Combining these together, it follows that $\arg\min_{x_l \in [L_l,U_l]} \int_{\prod_{j \neq l} [L_j,U_j]} \mu_{n,\phi_\lambda^{-1} \circ f}(\mathbf{x}) d\mathbf{x}_{-l}$ is equivalent to:
\small
\begin{align*} \arg\min_{x_l \in [L_l,U_l]} \left[ (1- \eta)  w_l \text{Vol}(\mathcal{X}_{-l}) \sum_{i=1}^n q_i\exp\left(-(x_l-x_{i,l}\right)^2/\theta_{A,l}^2) +  \pi^{\frac{d-1}{2}}\eta \sum_{i = 1}^n p_{i,l} q_i \exp(-\left(x_l-x_{i,l})^2/\theta_{Z,l}^2\right) \right],
\end{align*}
\normalsize
where $\text{Vol}(\mathcal{X}_{-l}) = \prod_{j \neq l} (U_j-L_j)$ and $p_{i,l} = \prod_{j \neq l} \theta_{Z,j} \left(\tilde\Phi_{i,j}(U_j) - \tilde\Phi_{i,j}(L_j) \right)$. This proves the statement.

\section{Proof of Theorem 4}
\label{pf:gpbommconv}

\begin{proof}
Denote $h(\mathbf{x}) = \phi_{\lambda}^{-1}\circ f(\mathbf{x)}$, and its posterior mean and variance as $\mu_{n,h}(\mathbf{x})$ and $k_{n,h}(\mathbf{x},\mathbf{x})$ (see Equation (12) of the main paper). To ease presentation, denote the minimizer as $\mathbf{x}^* = (x^*_1, \cdots, x^*_d)$ and the BOMM estimator to be $\widehat{\mathbf{x}}^{\text{BOMM}} \coloneq (\widehat x_1, \cdots, \widehat x_d)$. We prove the desired convergence result in the following two steps.\\

\noindent \textbf{Step 1:} 
    We first show that the marginal mean of $h$ can be well-approximated by that of $\mu_{n, h}(\mathbf{x})$. More precisely, we establish, for all $j = 1, \cdots, d$:
    \begin{equation}\label{eqn:marginal_effect_conv}
        \sup_{x_j \in [L_j, U_j]} \left | \int h(\mathbf{x}) d\mathbf{x}_{-j} - \int \mu_{n, h}(\mathbf{x}) d\mathbf{x}_{-j} \right| = \mathcal{O}\left(\exp\left(-C/d_n \right) \right),
    \end{equation}
    where $d_n := \sup_{\mathbf{x} \in \mathcal{X}}\inf_{\mathbf{x'} \in \{\mathbf{x}_1, \cdots, \mathbf{x}_n\}} \|\mathbf{x}-\mathbf{x}'\|$ is the so-called fill-distance in the kriging and kernel interpolation literature \citep{wendland2004scattered}.
    To show this, recall that $h \in \mathcal{H}_{\text{TAAG}}$. Then, by Corollary 3.11 in \citep{kanagawa2018gaussian}, we first have:
\begin{equation}\label{eqn:inter_pol_error}
       | h(\mathbf{x}) -  \mu_{n,h}(\mathbf{x}) | \le \|h\|_{\mathcal{H}_{\text{TAAG}}}\sqrt{V_{n, h}(\mathbf{x})}, ~~ \text{where} ~~ V_{n, h}(\mathbf{x}):= k_{n,h}(\mathbf{x},\mathbf{x})
    \end{equation}
    for all $\mathbf{x} \in \mathcal{X}$.
    Furthermore, since the kernel $k_{\text{TAAG}}$ is infinitely differentiable, by Theorem 11.22 in \citep{wendland2004scattered}, we know that, for large enough $n$:
    \begin{equation}\label{eqn:var_decay}
        \sup_{\mathbf{x} \in \mathcal{X}} \sqrt{V_{n,h}(\mathbf{x})} \lesssim \exp\left(-C/d_n \right),
    \end{equation}
    where the constant $C$ is independent of $n$. Combining \eqref{eqn:inter_pol_error} and \eqref{eqn:var_decay}, we have, for large enough $n$:
\begin{equation}\label{inter_error_bound}
    \sup_{\mathbf{x} \in \mathcal{X}} | h(\mathbf{x}) -  \mu_{n,h}(\mathbf{x}) |  \lesssim  \|h\|_{\mathcal{H}_{\text{TAAG}}}\exp\left(-C/d_n \right).
    \end{equation}
    Back to \eqref{eqn:marginal_effect_conv}, for any $j \in \{1, \cdots, d\}$, we have, for large enough $n$:
    \begin{align*}
         \sup_{x_j \in [L_j, U_j]} \left | \int h(\mathbf{x}) d\mathbf{x}_{-j} - \int \mu_{n,h}(\mathbf{x}) d\mathbf{x}_{-j} \right| \le \int 
     \sup_{\mathbf{x} \in \mathcal{X}}\left|h - \mu_{n,h}\right|d\mathbf{x}_{-j} \lesssim \|h\|_{\mathcal{H}_{\text{TAAG}}}\exp\left(-C/d_n \right).
    \end{align*} 
    \noindent \textbf{Step 2:} We next establish convergence of $\widehat{\mathbf{x}}^{\text{BOMM}} \coloneq (\widehat x_1, \cdots, \widehat x_d)$ to $\mathbf{x}^* = (x^*_1, \cdots, x^*_d)$ and $h(\widehat{\mathbf{x}}^{\text{BOMM}})$ to $h(\mathbf{x}^*)$. Let us define the following notation:
    $$
       m_j(x_j) \coloneqq \int h(\mathbf{x}) d\mathbf{x}_{-j}, \quad  \widehat m_j(x_j) \coloneqq \int \mu_{n,h}(\mathbf{x}) d\mathbf{x}_{-j}.
    $$
    Using the first-order dominating condition in Assumption 9, we know that $x^*_j$ minimizes $m_j(x_j)$. From Step 1 (see \eqref{eqn:marginal_effect_conv}), we have, for some constant $C > 0$ with large enough $n > 0$:
    \begin{equation}
    \left| m_j(\widehat x_j)-\widehat m_j(\widehat x_j) \right| \le  \sup_{x_j \in [L_j, U_j]} \left |\int h(\mathbf{x}) d\mathbf{x}_{-j}-\int \mu_{n,h}(\mathbf{x}) d\mathbf{x}_{-j} \right| = \mathcal{O}\left(\exp\left(-C/d_n\right) \right).\label{ineq_m_first}
    \end{equation}
    Similarly, by the same logic:
    \begin{equation}
    \left|\widehat m_j( x_j^*) - m_j( x_j^*) \right| \le  \sup_{x_j \in [L_j, U_j]} \left |\int \mu_{n,h}(\mathbf{x}) d\mathbf{x}_{-j}-\int h(\mathbf{x}) d\mathbf{x}_{-j} \right|  = \mathcal{O}\left(\exp\left(-C/d_n\right) \right).\label{ineq_m_second}
    \end{equation}
    Hence, for all $j \in \{1, \cdots, d\}$ with large enough $n>0$, we have:
    \begin{align*}
    0 &\le m_j(\widehat x_j) - m_j(x^*_j)\\ &= m_j(\widehat x_j) - \widehat m_j(\widehat x_j) + \widehat m_j(\widehat x_j) - \widehat m_j(x^*_j) + \widehat m_j(x^*_j) - m_j(x^*_j) \\
    &= \mathcal{O}\left(\exp\left(-C/d_n\right)\right) + \widehat m_j(\widehat x_j) - \widehat m_j(x^*_j) + \mathcal{O}\left(\exp\left(-C/d_n\right)\right),
\end{align*}
where the first inequality follows from the definition of $x^*_j$, and the last equality comes from \eqref{ineq_m_first} and \eqref{ineq_m_second}. Since $\widehat m_j(\widehat x_j) - \widehat m_j(x^*_j) \le 0$ by the definition of $\widehat x_j$, we deduce that:
$$
\left|m_j( \widehat x_j) - m_j(x^*_j)\right| = \mathcal{O}\left(\exp\left(-C/d_n\right)\right).
$$
Under Assumption 3, we know that the fill-distance $d_n$ converges to zero in probability as $n$ increases \citep{oates2019convergence,helin2023introduction}, yielding $m_j( \widehat x_j)\overset{P}{\rightarrow}m_j(x^*_j)$ for all $j \in \{1, \cdots, d\}$. Furthermore, as $h \in \mathcal{H}_{\text{TAAG}}$, $m_j$ is a continuous function on a closed interval $[L_j, U_j]$. Then, from the uniqueness of $\mathbf{x}^*$ (Assumption 8), $\widehat{\mathbf{x}}^{\text{BOMM}}\overset{P}{\rightarrow}\mathbf{x}^*$ follows. To see this, let $\epsilon > 0$ and consider a set $B_j := \{x_j: |x_j-x_j^*| \ge \epsilon \}$. Due to the uniqueness of $\mathbf{x}^*$, we know that $\inf_{x_j \in B_j} m_j(x_j) - m_j(x_j^*) \ge \eta$, for some $\eta > 0$. Therefore, for all $j \in \{1, \cdots, d\}$, we have:
$$
\mathbb{P}\left(|\hat x_j-x_j^*| \ge \epsilon\right) \le \mathbb{P}\left(m_j(\hat x_j) - m_j(x_j^*) \ge \eta/2\right) \overset{P}{\rightarrow} 0 ~~ \text{as} ~~ n\to\infty.
$$
Moreover, from the continuity of $h$ and $\phi_\lambda$, we obtain $h(\widehat{\mathbf{x}}^{\text{BOMM}})\overset{P}{\rightarrow}h(\mathbf{x}^*)$ as well as $f(\widehat{\mathbf{x}}^{\text{BOMM}})= \phi_\lambda\circ h(\widehat{\mathbf{x}}^{\text{BOMM}})\overset{P}{\rightarrow}f(\mathbf{x}^*) = \phi_\lambda\circ h(\mathbf{x}^*)$, which proves the claim.
\end{proof}

\section{Proof of Corollary 1}
\label{pf:gpbomtmconv}
\begin{proof}
To avoid confusion, let us denote the tail BOMM estimator (from Algorithm 1 of the main paper) as $\widehat{\mathbf{x}}^{\text{TBOMM}}$ and the BOMM estimator as $\widehat{\mathbf{x}}^{\text{BOMM}}$. Observe that:
\begin{align*}
    0 &\le h\left(\widehat{\mathbf{x}}^{\text{TBOMM}}\right) - h(\mathbf{x}^*)\\ &= h\left(\widehat{\mathbf{x}}^{\text{TBOMM}}\right) - \mu_{n,h}\left(\widehat{\mathbf{x}}^{\text{TBOMM}}\right) + \mu_{n,h}\left(\widehat{\mathbf{x}}^{\text{TBOMM}}\right) - \mu_{n,h}\left(\widehat{\mathbf{x}}^{\text{BOMM}}\right) \\
    &\quad + \mu_{n,h}\left(\widehat{\mathbf{x}}^{\text{BOMM}}\right) - h\left(\widehat{\mathbf{x}}^{\text{BOMM}}\right) + h\left(\widehat{\mathbf{x}}^{\text{BOMM}}\right) - 
    h(\mathbf{x}^*) \\
    &\le h\left(\widehat{\mathbf{x}}^{\text{TBOMM}}\right) - \mu_{n,h}\left(\widehat{\mathbf{x}}^{\text{TBOMM}}\right) + \mu_{n,h}\left(\widehat{\mathbf{x}}^{\text{BOMM}}\right) - h\left(\widehat{\mathbf{x}}^{\text{BOMM}}\right) + h\left(\widehat{\mathbf{x}}^{\text{BOMM}}\right) - 
    h(\mathbf{x}^*),
\end{align*}
where we used the identity 
$\mu_{n,h}\left(\widehat{\mathbf{x}}^{\text{TBOMM}}\right) \le \mu_{n,h}\left(\widehat{\mathbf{x}}^{\text{BOMM}}\right)$ in the last inequality, which holds using the specification rule for $\alpha$ in Appendix \ref{app:alpha}. From \eqref{inter_error_bound}, for large enough $n$, we know that:
\begin{align*}
\left|h\left(\widehat{\mathbf{x}}^{\text{TBOMM}}\right) - \mu_{n,h}\left(\widehat{\mathbf{x}}^{\text{TBOMM}}\right)\right| &= \mathcal{O}\left(\exp\left(-C/d_n\right) \right) \\
\left|\mu_{n,h}\left(\widehat{\mathbf{x}}^{\text{BOMM}}\right) - h\left(\widehat{\mathbf{x}}^{\text{BOMM}}\right)\right| &= \mathcal{O}\left(\exp\left(-C/d_n\right) \right).
\end{align*}
This gives us:
\begin{equation}\label{BOMTM_h_bound}
0 \le h\left(\widehat{\mathbf{x}}^{\text{TBOMM}}\right) - h(\mathbf{x}^*) = \mathcal{O}\left(\exp\left(-C/d_n\right) \right) + h\left(\widehat{\mathbf{x}}^{\text{BOMM}}\right) - h(\mathbf{x}^*).
\end{equation}
From Theorem 4, we observe that the right-most term of \eqref{BOMTM_h_bound} converges to zero in probability as $n\to \infty$. Under Assumption 3, we know the fill-distance $d_n\overset{P}{\rightarrow}0$ in probability as $n \to \infty$. And therefore, we have
$h\left(\widehat{\mathbf{x}}^{\text{TBOMM}}\right) \overset{P}{\rightarrow}h(\mathbf{x}^*)$. From the continuity of $\phi_\lambda$, we can further show the convergence of $f\left(\widehat{\mathbf{x}}^{\text{TBOMM}}\right) = \phi_\lambda \circ h(\widehat{\mathbf{x}}^{\text{TBOMM}}) \overset{P}{\rightarrow}f(\mathbf{x}^*) = \phi_\lambda \circ h(\mathbf{x}^*)$, as desired.
\end{proof}

\section{Selection of tail probability $\alpha$ in BOMM+}
\label{app:alpha}

In our experiments, we employ the following strategy for selecting the tail probability $\alpha$ in the tail BOMM estimator (see Equation (17) of the main paper). The idea is to choose $\alpha$ such that the predicted response of $h$ (and thus $f$) at the tail BOMM estimator $\hat{\mathbf{x}}_{n,\alpha}^*$ is minimized. In other words, this uses the fitted model to calibrate a good choice of $\alpha$ that effectively leverages local additivity for minimization; a similar idea was used in \citealp{mak2019analysis} for discrete optimization. Formally, $\alpha$ is selected as:
\begin{equation}
\alpha^* = \argmin_{\alpha \in (0,1]} \mu_{n,h}(\hat{\mathbf{x}}_{n,\alpha}^*),
\end{equation}
where $\mu_{n,h}$ is the posterior mean of $h$ (see Equation (12) of the main paper). This one-dimensional optimization is performed via grid search in our numerical experiments.

\section{Test function specification}
\label{sec:test}
We provide below the detailed specification of test functions in numerical experiments:
\begin{itemize}
\item The six-hump camel function \citep{molga2005test} in $d=6$ dimensions:
\begin{align*}
\begin{split}
f(\textbf{x}) &= \sum_{k=1}^{3} \left( \left( 4 - 2.1 x_{2k-1}^2 + \frac{x_{2k-1}^4}{3} \right) x_{2k-1}^2 + x_{2k-1} x_{2k} + (-4 + 4 x_{2k}^2) x_{2k}^2 \right) + 5,\\
&x_1, x_3, x_5 \in [-2,2], \quad x_2, x_4, x_6 \in [-1,1].
\vspace{-0.5cm}
\end{split}
\normalsize
\end{align*}
\item The wing weight function \citep{moon2010design} in $d=10$ dimensions:
\footnotesize
\begin{align*}
\begin{split}
    f(\textbf{x}) &= 0.036 S_w^{0.758} W_{fw}^{0.0035} \left( \frac{A}{\cos^2(\Lambda)} \right)^{0.6} q^{0.006} \lambda^{0.04} \left( \frac{100 t_c}{\cos(\Lambda)} \right)^{-0.3}(N_z W_{dg})^{0.49} + S_w  W_p, \\
        \textbf{x} &= (S_w, W_{fw}, A, \Lambda, q, \lambda, t_c, N_z, W_{dg}, W_p)\\
S_w &\in [150,200],\; W_{fw} \in [220,300],\; A \in [6,10],\; \Lambda \in [-10,10],\; q \in [16,45],\\
&\lambda \in [0.5,1],\; t_c \in [0.08,0.18],\; N_z \in [2.5,6],\; W_{dg} \in [1700,2500],\; W_p \in [0.025,0.08].
\end{split}
\end{align*}
\normalsize
    \item The OTL circuit function \citep{moon2012two} in $d=6$ dimensions:
    \begin{align*}\label{eq:otl}
\begin{split}
f(\textbf{x}) &= \frac{(V_{b1}+0.74)\,\beta\,(R_{c2}+9)}{\beta\,(R_{c2}+9)+R_f} + \frac{11.35\,R_f}{\beta\,(R_{c2}+9)+R_f} + \frac{0.74\,R_f\,\beta\,(R_{c2}+9)}{\bigl[\beta\,(R_{c2}+9)+R_f\bigr]\,R_{c1}},\\
        \mathbf{x} &= (R_{b1},\,R_{b2},\,R_f,\,R_{c1},\,R_{c2},\,\beta), \quad V_{b1} = \frac{12\,R_{b2}}{R_{b1}+R_{b2}}, \\
         &R_{b1}\in[50,150],\; R_{b2}\in[25,75],\; R_{f}\in[0.5,3],\\
&R_{c1}\in[1.2,2.5],\; R_{c2}\in[0.25,1.2],\; \beta\in[50,300].
\end{split}
\vspace{-0.5mm}
\end{align*}
\item The piston simulation function \citep{moon2010design} in $d=7$ dimensions:
\begin{align*}
\begin{split}
    f(\mathbf{x}) &= 2\pi \sqrt{\frac{M}{\,k + S^2\,\frac{P_0\,V_0}{T_0}\,\frac{T_a}{V^2}\,}}, \quad \mathbf{x} = (M,\,S,\,V_0,\,k,\,P_0,\,T_a,\,T_0),\\
        V &= \frac{S}{2k}\Bigl(\sqrt{\,A^2 + 4\,k\,\frac{P_0\,V_0}{T_0}\,T_a\,}
        \;-\;A\Bigr), \quad A = P_0 S + 19.62\,M \;-\; \frac{k\,V_0}{S}, \\
        &M\in[30,60],\; S\in[0.005,0.020],\; V_0\in[0.002,0.010],\; k\in[1000,5000],\\
&P_0\in[90\,000,110\,000],\; T_a\in[290,296],\; T_0\in[340,360].
\end{split}
\vspace{-0.5mm}
\end{align*}
\item Custom test function in $d=9$ dimensions (Equation (20) of the main paper): $\epsilon = 0.01$, $m_1 = m_3 = m_5 = 2.5$, $m_2 = m_8 = 3.5$, $m_4 = 4$, $m_6 = m_7 = m_9 = 4.5$.
\end{itemize}

\spacingset{0.94}
\small
\newpage
\bibliography{references}

\end{document}